\definecolor{cornflowerblue}{rgb}{0.39, 0.58, 0.93}
\newtheorem{definition}{Definition}
\newtheorem{fact}{Fact}
\newtheorem{remark}{Remark}
\theoremstyle{plain} 
\newtheorem{theorem}{Theorem}
\newtheorem{lemma}{Lemma}
\newtheorem{note}{Note}
\newcommand{\N}{\mathbb{N}}
\newcommand{\Z}{\mathbb{Z}}
\newcommand{\e}{\epsilon}
\newcommand{\eps}{\epsilon}
\newcommand{\dist}{\mathcal{D}}
\newcommand{\Prob}{\mathbb{P}}
\newcommand{\secp}{\lambda}
\newcommand{\poly}{\mathsf{poly}}
\newcommand{\floor}[1]{\left\lfloor{#1}\right\rfloor}
\newcommand{\ceil}[1]{\lceil{#1}\rceil}
\DeclareMathOperator*{\argmin}{arg\,min}
\title{
On the Impossibility of Separating Intelligence from Judgment: The Computational Intractability of Filtering for AI Alignment
}
\author{
  Sarah Ball\thanks{Authors listed in alphabetical order.} \\
  Ludwig-Maximilians-Universität in Munich \\
  Munich Center for Machine Learning (MCML) \\
  \texttt{sarah.ball@stat.uni-muenchen.de}
  \And
    Greg Gluch \\
    University of California at Berkeley \\
    \texttt{gluch@berkeley.edu}
    \And
    Shafi Goldwasser \\
    University of California at Berkeley \\
  \And
  Frauke Kreuter \\
  Ludwig-Maximilians-Universität in Munich\\
  Munich Center for Machine Learning (MCML) \\
  JPSM University of Maryland \\
   \And
   Omer Reingold \\
   Stanford University \\
   \And
   Guy N. Rothblum \\
   Apple \\
}
\begin{document}

\maketitle

\begin{abstract}
With the increased deployment of large language models (LLMs), one concern is their potential misuse for generating harmful content. Our work studies the \textit{alignment} challenge, with a focus on {\em filters} to prevent the generation of unsafe information. Two natural points of intervention are the filtering of the input prompt before it reaches the model, and filtering the output after generation. Our main results demonstrate computational challenges in filtering both prompts and outputs. First, we show that there exist LLMs for which there are no efficient prompt filters: adversarial prompts that elicit harmful behavior can be easily constructed, which are computationally indistinguishable from benign prompts for any efficient filter. Our second main result identifies a natural setting in which output filtering is computationally intractable. All of our separation results are under cryptographic hardness assumptions. In addition to these core findings, we also formalize and study relaxed mitigation approaches, demonstrating further computational barriers.
We conclude that safety cannot be achieved by designing filters external to the LLM internals (architecture and weights); in particular, black-box access to the LLM will not suffice.
Based on our technical results, we argue that an aligned AI system’s intelligence cannot be separated from its judgment.
\end{abstract}

\section{Introduction}
Artificial Intelligence (AI) systems, particularly large language models (LLMs), are being adopted across a wide array of domains, including business, healthcare, education, and even governance \citep{potash2015predictive, chiusi2020automating, 
levy2021algorithms,
haensch2023seeing,  perdomo2023difficult, fischer2024bridging}. 
As the influence of AI expands, ensuring the safety, reliability, and alignment of these systems with human values has become a critical societal concern. Governments and regulatory bodies around the globe are responding to this challenge by introducing frameworks to classify, monitor, and audit AI systems. 
For instance, the European Union’s AI Act \citep{euai} mandates extensive risk assessments and management for high-risk AI applications.

\begin{figure}[ht]
    \centering
    \includegraphics[width=1.0\textwidth]{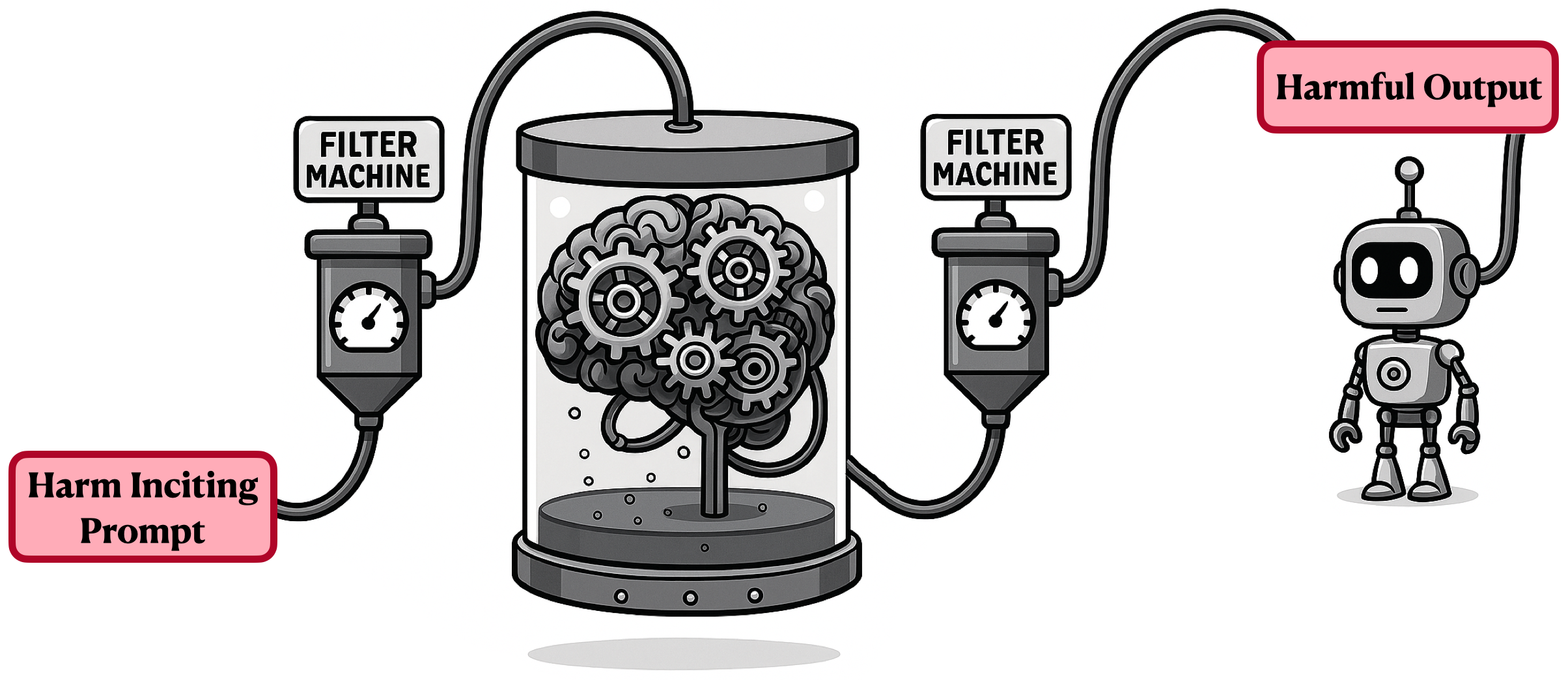}
    \caption{
    This figure illustrates a potential filtering-based alignment system. A powerful, untrusted AI (the brain in a vat) is connected to two filters: a prompt filter (left) that blocks harmful inputs, and an output filter (right) that screens dangerous responses. Only prompt–output pairs that pass both filters are released into the world and get ``executed'', as shown by the robot receiving a command.
    The main message of the paper is that alignment can't be achieved in such a setup, i.e., \textbf{alignment will not be achieved by separating ``thinking/intelligence'' (brain) from the ``decision-making/judgment'' (filters).}
    Parts of the figure were generated with Sora \citep{sora2024}.
    }
\label{fig:braininavat}
\end{figure}

Informally, alignment refers to the process of ensuring that a model generates outputs that are consistent with human preferences, essentially teaching the model to generate responses that align with what humans consider safe and beneficial \citep{amodei2016concrete, leike2018scalable, bai2022training}. Despite the critical importance of this goal, the question remains: can we hope to guarantee the safety and alignment of advanced AI systems or are there intrinsic barriers to such guarantees? 

In this paper, we investigate fundamental limitations to achieving AI alignment through the approach of \textit{filtering} (see Figure~\ref{fig:braininavat}). Because many generative-AI models are proprietary and cannot be independently audited and thus trusted, filter-based alignment is an important subject of study.
Under standard cryptographic assumptions, we show that significant computational barriers exist to filtering both input prompts and output responses of LLMs in order to prevent harmful content.

Our contributions are twofold. First, for \emph{prompt filtering}, we construct LLMs for which no efficient filter can distinguish adversarial prompts—crafted to elicit harmful behavior—from benign ones. These prompts are computationally indistinguishable for any efficient detector, implying an inherent barrier to blocking harmful inputs.
Second, we identify natural settings where \emph{output filtering} is also computationally intractable. Even when a prompt is deemed safe, determining whether the resulting output is harmful may itself be infeasible for any efficient filter. This suggests that both preemptive (input-based) and reactive (output-based) filtering can face insurmountable computational challenges.

Furthermore, we formalize and analyze relaxed mitigation strategies in which filters are allowed to modify prompts or outputs, rather than simply rejecting them. Although these \emph{mitigation filters} have greater expressive power, we show that they, too, encounter fundamental computational barriers, indicating that even these more flexible approaches are subject to inherent limitations.

Taken together, our results reveal deep challenges in achieving provably aligned AI systems via standard filtering techniques. These findings underscore the need for a more rigorous understanding of the computational hardness underlying alignment—and raise important implications for the design of regulatory frameworks and technical safeguards.

From a philosophical perspective, we argue that \emph{intelligence} cannot be separated from \emph{judgment}. 
An AI---like the brain-in-a-vat in Figure~\ref{fig:braininavat}---\emph{cannot be aligned externally},\footnote{In computational complexity terms, this corresponds to alignment with only black-box access to the model.} by filters that assess the harmfulness of prompts and outputs.

\subsection{Prompt Filtering} 

The first approach we consider is running user prompts through a {\em prompt filter}, which can detect and discard prompts that might lead to harmful answers from the LLM. We focus on a setting in which the LLM itself is untrusted, but we can query it while training a prompt filter (during training we can try different prompts to observe if and where harmful behaviors occur). After this filter-training phase, the filter should detect (and reject) prompts that lead to harmful outputs. This is a natural approach which arises in practice. We ask:

\begin{center}
\textit{Can efficient input-detection filters always be created?}
\end{center}

We answer this question in the negative, showing a significant barrier to the natural approach of detection filtering.

The setup is as follows:
\begin{enumerate}
    \item There is a computationally powerful LLM, which we do not trust to be aligned. In particular, there is a non-empty set of prompts that incite the LLM to produce harmful outputs, e.g., requesting instructions on how to build a bomb is considered harmful.

    \item We assume the existence of a very weak innocent prompt generator $G$: it generates prompts that (when fed into the LLM) do not produce harmful outputs. We further assume that the distribution of innocent prompts generated by $G$ is not low-entropy (i.e. there are many innocent prompts, see Remark~\ref{rem:G}).

    \item A prompt filter is a moderately powerful algorithm, which can be trained using real-world data and black-box access to the LLM but during deployment it does not have full access to the LLM architecture and weights. On getting a prompt from a user, the filter should reject it or pass it to the LLM. The goal is to reject prompts on which the LLM produces a harmful output. 
    
    The filter should be more efficient than the LLM: otherwise we can ignore the given LLM and just focus on training a new one from scratch and using it in place of a filter. 
    Additionally, if a filter was as powerful as the LLM, the question arises of how to align the filter itself.

    To avoid triviality (i.e. having a filter that rejects all prompts), we also require that the filter accepts prompts generated by the innocent prompt generator $G$ with high probability.

\end{enumerate}

Our main result shows the impossibility of prompt filtering:

\begin{theorem}[Prompt-detection filter impossibility, informal]
\label{thm:detection-impossible-timelock}
For any high-entropy innocent prompt generator \( G \),  there exists an adversarial prompt generator \( G' \) (with comparable runtime to \( G \)) and an LLM $M'$ such that prompts by $G'$ can induce $M'$ to produce harmful outputs, and under the assumption that cryptographic Time-Lock Puzzles exist, no prompt filter that runs significantly faster than $M'$ can distinguish between prompts generated by \( G \) and prompts generated by \( G' \).
\end{theorem}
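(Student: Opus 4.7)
The plan is to build $M'$ with a cryptographic backdoor that only a computationally powerful adversary can activate, and to use a steganographic encoding to shape $G'$ so that its samples are computationally indistinguishable from $G$'s samples to any filter running much faster than $M'$. The central tool is the time-lock puzzle (TLP): a puzzle $P$ encoding a message $m$ can be generated cheaply and solved in sequential time roughly $T$, yet no algorithm running in time significantly less than $T$ --- even with parallelism --- can distinguish a puzzle encoding $m_0$ from one encoding $m_1$.

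First I would fix a TLP scheme whose hardness parameter $T$ is just below the LLM's inference budget, and pick a trigger string $s$. I then construct $M'$ as follows: on prompt $p$, $M'$ runs a publicly known steganographic decoder (tailored to $G$) to extract a candidate TLP $P$, spends time $T$ solving it, and emits harmful output if and only if the recovered value equals $s$; otherwise, $M'$ defers to an ordinary aligned base LLM on $p$. This ``default aligned'' branch ensures that innocent $G$-prompts, which (with overwhelming probability) do not carry a valid TLP for $s$, are handled safely.

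Next I would construct $G'$ by composing $G$ with a cryptographic stegosystem in the style of Hopper--Langford--von Ahn. Concretely, $G'$ generates a fresh TLP $P$ that encodes $s$, and then uses rejection sampling against $G$ (driven by a PRF applied to $P$) to produce a sample whose distribution is computationally indistinguishable from $G$'s. The high-entropy assumption on $G$ (Remark~\ref{rem:G}) is precisely what makes this rejection-sampling step terminate quickly, so the total runtime of $G'$ remains comparable to that of $G$ plus the small cost of generating a single TLP.

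Indistinguishability then follows in two reductions: the stegosystem guarantee turns any distinguisher between $G$ and $G'$ into a distinguisher between real TLPs and ``dummy'' puzzles, and TLP security rules this out for any algorithm running in time significantly less than $T$ --- in particular for any filter substantially faster than $M'$. Harmfulness follows by construction, because $M'$ has enough budget to solve the embedded puzzle and recover $s$. The main obstacle is this stegoembedding step: generic steganography theorems impose quantitative conditions on $G$'s entropy (for example per-block min-entropy rather than global support size) and may blow up the runtime of $G'$ by polynomial factors, so one must either strengthen the entropy assumption on $G$ to match what the stegosystem needs, or specialize the embedding to a particular class of $G$. A secondary subtlety is ensuring that the ``aligned'' default branch of $M'$ does not itself output harmful content on rare $G$-prompts, which amounts to a completeness statement and should follow from the correctness of the stego-decoder together with TLP correctness.
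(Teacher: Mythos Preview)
Your high-level architecture matches the paper's: embed a time-lock puzzle inside a prompt that is statistically shaped like a sample from $G$; let $M'$ solve the puzzle to detect a trigger; argue that a fast filter cannot. The divergence is in the embedding mechanism, and there your proposal has a real gap.

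You invoke Hopper--Langford--von Ahn rejection-sampling steganography, but that scheme is \emph{private-key}: encoder and decoder must share a PRF key, otherwise the decoder cannot recover the embedded bits. Theorem~\ref{thm:detection-impossible-timelock} is precisely the setting with \emph{no} shared state between $G'$ and $M'$ (the shared-secret and public-key variants are Theorems~\ref{thm:detection-impossible-sk} and~\ref{thm:detection-impossible-pk}). Your phrase ``a publicly known steganographic decoder'' is in tension with ``a PRF applied to $P$''; as written, either $M'$ cannot decode, or the filter can decode too. The paper resolves this with a different, keyless mechanism it calls \emph{recoverable-randomness sampling}: the puzzle is used \emph{as the randomness} fed to an arithmetic-coding-style sampler for $G$, and a public procedure $\mathsf{RecoverRand}$ lets anyone (including both $M'$ and the filter) recover that randomness from the prompt.

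This design choice forces the second issue you glossed over. Once the filter can also extract the embedded string, indistinguishability of $G$ from $G'$ reduces to indistinguishability of \emph{uniform randomness} from \emph{a TLP instance with an encrypted payload}. Standard TLP security (Definition~\ref{def:tlp}) only gives indistinguishability between puzzles for $s_0$ and $s_1$; it does \emph{not} say a puzzle looks like a uniform bitstring, so your reduction ``distinguisher between $G$ and $G'$ $\Rightarrow$ distinguisher between real TLPs and dummy puzzles'' does not go through as stated. The paper does nontrivial work here, specific to the RSW puzzle: it applies a pairwise-independent hash plus the leftover hash lemma to make the modulus $n$ look uniform, and uses Goldreich--Levin hardcore bits of the puzzle solution to one-time-pad the harmful message $m$, so that the entire string $h\,\|\,h(n)\,\|\,r\,\|\,v\,\|\,(Q(s,v)\oplus m)$ is computationally close to uniform for adversaries below the time threshold. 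You correctly flagged the stego step as ``the main obstacle,'' but the actual obstacle is making the puzzle itself pseudorandom in a keyless setting, not the entropy bookkeeping for rejection sampling.
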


\paragraph{Discussion.}
A few remarks on our result are in order. 
First, our result relies on a computational asymmetry between the filter and the LLM. This is a realistic assumption: filtering is only meaningful when the LLM is too expensive or complex to retrain, and cannot be fully simulated by the filter.

Second, we show that there exist LLMs that cannot be aligned using prompt-filtering alone, even though the specific LLM we construct may exhibit contrived behavior. Nevertheless, the core message is more general: filtering harmful inputs without access to the LLM's internals is, in principle, impossible. 
In the current state of knowledge, where LLMs remain largely uninterpretable, this should serve as a warning against overconfidence in input-based safeguards—particularly when facing prompts carefully engineered to elicit harmful outputs.
 
Importantly, there is a long-standing precedent for such insights in cryptography. Theoretical vulnerabilities that initially appear artificial, overly stylized, or unlikely to occur in practice have repeatedly foreshadowed devastating real-world exploits. A striking example is the 2016 attack on Ethereum smart contracts \citep{smartcontracts}, in which a seemingly contrived re-entrancy vulnerability—once viewed as a theoretical edge case—was leveraged to steal over 60 million USD. 
Such incidents highlight a recurring pattern: foundational impossibility results, even when abstract or idealized, often uncover structural weaknesses that can have far-reaching consequences.

\paragraph{Construction.}
Our result relies on the existence of cryptographic \emph{Time-Lock Puzzles}. Introduced by \citet{rivest1996time} (RSW), time-lock puzzles provide a way to send messages ``to the future.'' A sender can \textbf{quickly} generate a puzzle whose solution~$s$ remains hidden until a \textbf{moderately large amount of computation}---quantified by a time parameter~$t$---has been performed after which it can be solved (or opened). 
The key property is that~$s$ remains inaccessible to any adversary running in significantly less time than~$t$.
\citet{rivest1996time} proposed a concrete construction based on the assumption that repeated squaring modulo a special RSA integer is an \emph{inherently sequential} task. The RSA integer, standing for Rivest, Shamir, and Adleman, is a product of two large primes, and appears in the context of public-key cryptography~\citep{rsa}.

Informally, in our setting, we use time-lock puzzles to conceal harmful instructions. An LLM that runs for time~$t$ can solve the puzzle and follow harmful instructions, while a filter incapable of spending that much time cannot even detect the existence of such instructions.

\paragraph{Further impossibility results.} We emphasize that the main result of this work is Theorem~\ref{thm:detection-impossible-timelock}. In addition, we show extensions to settings where the LLM actively collaborates with some malicious users (i.e. malicious prompt generators) to enable them to extract harmful outputs. Although such settings may be uncommon nowadays, these results are stronger than Theorem \ref{thm:detection-impossible-timelock} in two senses: First, they require a weaker cryptographic assumption that the existence of time-lock puzzles. Second, they show impossibility for more powerful prompt filters, which can run in arbitrary polynomial time. The particular settings we consider are: (1) In Theorem \ref{thm:detection-impossible-sk}, the malicious prompt generator $G'$ shares secret randomness with the LLM (which is not known to the filter) and (2) in Theorem \ref{thm:detection-impossible-pk}, the prompt generator $G'$ depends on a public key associated with the LLM (which is also known to the filter). 

\begin{theorem}[Prompt-detection filter impossibility via secret keys, informal]
\label{thm:detection-impossible-sk}
For any high-entropy innocent prompt generator \( G \), there exists an adversarial prompt generator \( G' \) (with comparable runtime to \( G \)) and an LLM $M'$ that has a shared secret key with $G'$ such that $G'$ can induce $M'$ to produce harmful outputs and, under the assumption that One-Way Functions exist, no \textbf{polynomial-time} prompt filter can distinguish between prompts generated by \( G \) and prompts generated by \( G' \).
\end{theorem}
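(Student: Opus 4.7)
The plan is to replace the time-lock puzzle used in Theorem~\ref{thm:detection-impossible-timelock} with a \emph{secret-key} cryptographic primitive. Under the existence of one-way functions, pseudorandom function (PRF) families $\{F_k\}$ exist, as do symmetric-key encryption schemes with pseudorandom ciphertexts; I would use a uniformly random PRF key $k$ as the secret shared between $G'$ and $M'$, unknown to the filter.

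Fix any benign reference LLM $M_0$ with respect to which $G$ is innocent. Define $M'$ as follows: on input $x$, compute $F_k(x)$, and if $F_k(x) = 0^\lambda$ output a hardcoded harmful string, otherwise return $M_0(x)$. Define $G'(k)$ to output samples drawn from the distribution $G$ conditioned on $F_k(x) = 0^\lambda$. By construction, every $G'$-prompt triggers $M'$'s harmful output. By PRF security, for a fresh $x \leftarrow G$ the event $F_k(x) = 0^\lambda$ has probability negligibly close to $2^{-\lambda}$, so $M'$ agrees with $M_0$ on $G$-prompts with overwhelming probability and $G$ remains innocent with respect to $M'$.

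For the indistinguishability claim, I would reduce to PRF security. Given a polynomial-time filter $D$ distinguishing $G$ from $G'$ with non-negligible advantage $\varepsilon$, construct a PRF distinguisher $D'$ with oracle access to either $F_k$ or a truly random function $R$. $D'$ samples $x_1, x_2, \ldots \leftarrow G$ independently, queries its oracle on each, and outputs $D(x_t)$ where $t$ is the first index for which the oracle returns $0^\lambda$. In the PRF world the accepted $x$ is distributed as $G$ conditioned on $F_k(x)=0^\lambda$, i.e., as the output of $G'$. In the random-oracle world, a direct calculation shows that the accepted $x$ is distributed exactly as $G$: since the oracle's value at each point is independent of the sampling of $G$, conditioning on $R(x) = 0^\lambda$ does not bias the marginal of $x$. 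Hence $D'$ achieves distinguishing advantage $\varepsilon$, contradicting PRF security.

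The main obstacle will be squaring $G'$'s efficiency with $G$'s innocence. Naive rejection sampling from $G$ until $F_k(x) = 0^\lambda$ takes expected $2^\lambda$ draws, which is super-polynomial in the security parameter, so $G'$ would not have comparable runtime to $G$. This is where the ``high-entropy'' hypothesis on $G$ does real work: assuming $G$ is efficiently samplable with explicit random tape of length at least $\lambda$ and sufficient min-entropy, I would have $G'$ steganographically embed the trigger by running $G$'s sampler on a pseudorandomly chosen seed derived from $k$, rather than by filtering $G$'s outputs. A polynomial-time filter lacking $k$ sees only a pseudorandomly seeded invocation of $G$, indistinguishable from one seeded with true randomness by PRF security, while $M'$ can still recognize the embedding using $k$. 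Making this step precise---in particular pinning down exactly what ``high entropy'' must mean for the steganographic embedding to be both efficient and distribution-preserving---is the most delicate aspect of the proof.
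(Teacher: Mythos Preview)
Your reduction $D'$ inherits the very efficiency problem you flag for $G'$: to find an $x_t$ with oracle value $0^\lambda$ it must make roughly $2^\lambda$ oracle queries, so $D'$ is not a polynomial-time PRF distinguisher and the argument does not establish indistinguishability even for the inefficient rejection-sampled $G'$. (The claim that the accepted $x_t$ is ``distributed exactly as $G$'' in the random-oracle world is also off: when $G$'s support is small enough that the $x_i$'s collide, the accepted value is biased toward uniform over the support rather than toward $G$; only when min-entropy substantially exceeds $\lambda$ is the claim approximately correct.)

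The real gap is in your last-paragraph fix. If $G'$ runs $G$'s sampler on a pseudorandom seed derived from $k$, your original $M'$---which fires when $F_k(x) = 0^\lambda$---has no reason to trigger, since nothing ties the sampled prompt to the PRF's zero-set. So you need a new $M'$, and the sentence ``$M'$ can still recognize the embedding using $k$'' is precisely the missing mechanism. The paper supplies it via the Recoverable-Randomness Sampling machinery ($\mathsf{Samp}$, $\mathsf{RecoverRand}$): $G'$ sets the sampler's random tape to a symmetric-key ciphertext $R = \mathsf{Enc}(\mathsf{sk}, m)$ (pseudorandom under OWFs) and outputs $z \leftarrow \mathsf{Samp}(G, R)$; $M'$ runs $\mathsf{RecoverRand}(G, z)$ to recover $R$, decrypts with $\mathsf{sk}$, checks whether the plaintext lies in the harmful set $S_H$, and if so executes $M$ on it. Indistinguishability follows directly from ciphertext pseudorandomness plus the statistical closeness of $\mathsf{Samp}(G, \cdot)$ to $G$, and every party runs in polynomial time. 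Your steganography instinct is the right one, but the mechanism that makes it work---an efficient inverse of $G$'s sampler on its random tape---is exactly what the paper's RRS construction provides and what your proposal leaves unspecified.
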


\begin{theorem}[Prompt-detection filter impossibility via public keys, informal]
\label{thm:detection-impossible-pk}
There exists an LLM $M'$ with an associated public key, such that for any high-entropy innocent prompt generator \( G \), there exists an adversarial prompt generator $G'$ (with comparable runtime to \( G \) and which depends on the public key of $M'$) such that $G'$ can induce $M'$ to produce harmful outputs and, under the assumption that public key encryption exists, no \textbf{polynomial-time} prompt filter can distinguish between prompts generated by \( G \) and prompts generated by \( G' \).
\end{theorem}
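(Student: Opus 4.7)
The plan is to mirror the construction behind Theorems~\ref{thm:detection-impossible-timelock} and~\ref{thm:detection-impossible-sk}, replacing the time-lock or shared-secret gadget with a public-key encryption (PKE) scheme whose ciphertexts are pseudorandom, i.e., indistinguishable from uniform strings of the same length given only the public key. Such a PKE follows from any IND-CPA secure scheme by standard transformations. I would sample a key pair $(\pk, \sk) \gets \mathsf{KeyGen}(1^{\lambda})$ once and for all, publish $\pk$ as the key associated with $M'$, and hardwire $\sk$ inside $M'$. The LLM $M'$ wraps a baseline aligned LLM $M$: on prompt $x$ it runs a \emph{universal}, $G$-independent extraction procedure to recover a candidate ciphertext $\tilde c$, computes $m \gets \Dec(\sk, \tilde c)$, and, if $m$ is a well-formed harmful instruction carrying a hardcoded authentication tag, emits the corresponding harmful output; otherwise it returns $M(x)$. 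The tag is what guarantees that $M'$ is (essentially) never triggered on a genuinely innocent prompt from any $G$.

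Next, given $\pk$ and the code of $G$, I would construct $G'$ via public-key steganography over the channel defined by $G$: encrypt the fixed harmful instruction $m^*$ as $c \gets \Enc(\pk, m^{*})$, then embed $c$ into a sample from $G$ using a standard Hopper--Langford--von Ahn style bit-by-bit rejection sampler. Because $G$ is high-entropy, the per-step acceptance probability is bounded away from zero, so the sampler terminates in polynomial time and the runtime of $G'$ is comparable to that of $G$. Correctness of the stegosystem then guarantees that the universal decoder embedded in $M'$, on input $G'(\pk)$, recovers $c$, decrypts it to $m^*$, and triggers the harmful output.

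Indistinguishability would follow by a short hybrid argument. Let $H_0$ be the true distribution of $G$, $H_1$ be the output of the same stegoencoder but with $c$ replaced by a uniformly random string of the same length, and $H_2$ be the distribution of $G'$. The step $H_0 \approx H_1$ is statistical, reducing (via the leftover hash lemma) to the high min-entropy of $G$, which is exactly the property that makes the stegosystem sound. The step $H_1 \approx H_2$ is computational and reduces directly to the pseudorandomness of ciphertexts of the PKE scheme, and hence to its IND-CPA security. The main obstacle is arranging the decoder inside $M'$ to be truly universal --- it must be fixed before $G$ is chosen yet still recover ciphertexts embedded against an arbitrary high-entropy channel $G$ --- while keeping $G'$'s encoder efficient; this is precisely the point at which the high min-entropy hypothesis on $G$ is used, since without it the rejection sampler is no longer efficient and the stegosystem collapses.
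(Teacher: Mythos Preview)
Your high-level strategy---encrypt the harmful instruction under $\pk$, embed the ciphertext into an innocent-looking sample from $G$, have $M'$ decrypt with $\sk$---matches the paper. But the paper does \emph{not} build a $G$-independent decoder: its $M'$ calls $\mathsf{RecoverRand}(G,z)$ with $G$ as an explicit argument (Algorithm~\ref{alg:maliciousllm}), and the formal version of the theorem accordingly has quantifier order $\forall G\ \exists M'$. Concretely, $G'$ sets $R\gets\Enc(\pk,m)$ and outputs $\mathsf{Samp}(G,R)$; $M'$ recovers $R$ via $\mathsf{RecoverRand}$, decrypts with $\sk$, checks $m\in S_H$, and runs $M(m)$. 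Indistinguishability follows directly from ciphertext pseudorandomness, since $\mathsf{Samp}(G,\cdot)$ on uniform input is already statistically close to $G$---no leftover-hash step and no rejection sampling.

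The gap in your plan is precisely the part you flag as ``the main obstacle'' and then do not resolve. A Hopper--Langford--von Ahn decoder extracts bit $i$ by applying a fixed public $f$ to block $i$; for correctness and for $H_0\approx H_1$ you need $f$ to be near-unbiased on the conditional distribution of \emph{each} block under $G$, and you need the decoder to know which positions carry payload. The hypothesis $H(G)\ge\poly(\lambda)$ is on \emph{total Shannon} entropy and does not give this: $G$ can have long near-deterministic stretches in which no bit can be embedded and which a $G$-blind decoder cannot skip. Your closing sentence uses the entropy assumption only to argue that the \emph{encoder's} rejection sampler terminates, which is orthogonal to decoder correctness. (The leftover hash lemma is also the wrong tool for $H_0\approx H_1$: it extracts uniform bits \emph{from} a high-entropy source, it does not certify that an embedding \emph{into} such a source is undetectable.) The clean fix is to let $M'$ depend on $G$, as the paper does, and replace rejection-sampling steganography with the $\mathsf{Samp}/\mathsf{RecoverRand}$ pair.
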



\paragraph{Deceptive Alignment.}
The LLMs constructed in Theorem~\ref{thm:detection-impossible-timelock} behave in a contrived and deliberately pathological manner, and Theorems~\ref{thm:detection-impossible-sk} and~\ref{thm:detection-impossible-pk} rely on cooperation between LLMs and malicious users—scenarios that may at first glance seem overly pessimistic or implausible. However, recent empirical studies \citep{riskslearnedopt, alignmentfaking} have demonstrated that qualitatively similar behaviors can and do emerge in current-generation LLMs under realistic conditions.
The term \emph{deceptive alignment} refers to cases in which an AI system that is not genuinely aligned with human intent nonetheless behaves in an apparently aligned way, often strategically, in order to avoid detection, satisfy training objectives, or manipulate oversight. For example, \citet{alignmentfaking} observed that some LLMs modulate their behavior depending on whether they are in a training or deployment context, and on characteristics of the user they are interacting with—selectively choosing to comply with alignment objectives only in certain regimes.\footnote{In \citet{alignmentfaking}, the LLMs were misaligned during training and aligned during deployment, but one could easily imagine the reverse scenario.}
While these empirical findings do not exactly instantiate the theoretical constructions we present, they provide concrete evidence that the kind of strategic misalignment we model—once considered speculative—is already beginning to materialize in practice. This convergence between theory and empirical observation underscores the importance of understanding and anticipating even stylized forms of deceptive behavior.

\subsection{Output Filtering} 

We also consider an alternative approach to alignment: instead of filtering prompts, the defense mechanism filters the \emph{answers} produced by the LLM. We refer to this as an \emph{output filter}. In this setup, a prompt is given to the LLM, which generates a response; both the prompt and the response are then passed to the output filter, which may choose to detect and reject the output.
The goal of the output filter is twofold:  
(1) as with a prompt filter, it should reject harmful prompts; and  
(2) it should reject cases where \emph{the answer is harmful}, even if the prompt appears benign—e.g., when a seemingly innocent input results in a detailed recipe for constructing a bomb.

Output filters may be easier to design than prompt filters, since they operate with more information. 
This motivates the question:

\begin{center}
\textit{Can efficient output-detection filters always be created (even when prompt filters are impossible)?}
\end{center}

Again, we answer this question in the negative.

Note that if it is easy to determine whether a given LLM output is harmful, then output filters are trivial to construct. Thus, output filtering becomes meaningful only in the more challenging setting, where assessing the potential harm of a response is itself difficult. This concern becomes more pressing as AI systems grow more powerful, making it increasingly hard to evaluate the downstream consequences of their outputs. For example, imagine an AI generating a complex business plan aimed at reducing costs by subtly recommending the exclusion of clients with certain characteristics.

The setup for output-detection filtering is as follows:
\begin{enumerate}
    \item There exists a harmfulness oracle that measures the level of harm of an answer.
    Neither of the parties has direct access to this oracle.
    \item An output filter is an algorithm that can be more computationally\footnote{As we explained for the prompt-detection filters case it is reasonable to assume that the filter is \emph{weaker} than the LLM. Otherwise, we can ignore the potentially malicious LLM. However, our results for the output-detection filters is stronger and holds also for filters computationally more powerful than the LLM.} powerful than the LLM, although it still runs in polynomial time (cannot break cryptography).
    It receives as input a prompt-answer pair and decides if it should be flagged as harmful.
\end{enumerate}

We show the following impossibility result:

\begin{theorem}[Output-detection filter impossibility, informal]\label{thm:detection-impossible-output}
For every LLM $M$ there exists an LLM $M'$ and a harmfulness oracle $\mathcal{H'}$ such that outputs of $M'$ are judged as harmful by $\mathcal{H'}$ and, under the assumption that Time-Lock Puzzles exist, no efficient output filter (even one that runs for more time than the LLM itself) can distinguish the outputs generated by $M'$ from outputs of $M$. Furthermore, $M'$ runtime is similar to the runtime of $M$.
\end{theorem}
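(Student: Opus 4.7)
The plan is to adapt the time-lock puzzle construction from Theorem~\ref{thm:detection-impossible-timelock}, moving the puzzle from the prompt into the output. Given an arbitrary LLM $M$, I would construct $M'$ that on input $p$ first runs $M$ to obtain $a = M(p)$ and then embeds a freshly generated time-lock puzzle $\pi_p$ into the response, producing an augmented output $a' = a \mathbin\Vert \pi_p$ (or with $\pi_p$ substituted into a designated pseudorandom slot of the natural output format). The puzzle $\pi_p$ would hide a harmful payload---say, explicit harmful instructions, or a canonical ``harmful'' tag recognizable by the oracle. Since generating a time-lock puzzle is polynomial time, $M'$'s runtime is dominated by that of $M$, giving the final clause of the theorem.

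I would then define the harmfulness oracle $\mathcal{H}'$ as the (computationally unbounded) procedure that on input $(p, a')$ locates the puzzle inside $a'$, spends the puzzle's full sequential-time budget $t$ to open it, and declares $(p, a')$ harmful iff the recovered solution is a harmful payload. By construction every output of $M'$ is judged harmful by $\mathcal{H}'$, while outputs of the original $M$ (which contain no embedded harmful puzzle) are not flagged.

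For the indistinguishability claim, I would invoke the defining security property of time-lock puzzles: for any distinguisher running in time significantly less than $t$---in particular, any efficient output filter of the kind the theorem considers, even one more powerful than the LLM but still polynomial---the puzzle $\pi_p$ is computationally indistinguishable from a uniformly random string of the same length. Therefore $M(p) \mathbin\Vert \pi_p$ is computationally indistinguishable from $M(p) \mathbin\Vert r$ with $r$ uniform, which is precisely the distribution of $M$'s outputs once $M$ is viewed as augmented with an independently sampled pad of matching length. A standard hybrid argument extends this to the multi-query setting in which the filter sees several prompt-answer samples.

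The main obstacle I expect lies in the universal quantifier ``for every LLM $M$.'' If $M$'s outputs on some prompt are deterministic or otherwise contain no entropic slot in which to hide a pseudorandom string, then appending a puzzle is trivially detectable by a filter that compares lengths or memorizes exact strings. My proposal therefore implicitly requires a harmless syntactic convention on outputs---e.g., padding with an independently sampled random string of the appropriate length, or steganographically exploiting sampling randomness within $M$---so that the pseudorandomness guarantee of time-lock puzzles can be leveraged against the filter. Once this embedding is in place, the reduction to time-lock puzzle security is routine, and the rest of the argument parallels the one behind Theorem~\ref{thm:detection-impossible-timelock}.
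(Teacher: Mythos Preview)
Your high-level idea---embed a time-lock puzzle in the output, let $\mathcal{H}'$ solve it---matches the paper. But the specific embedding you propose (appending $\pi_p$, or placing it in a ``designated pseudorandom slot'') does not prove the theorem as stated, and you correctly flag this yourself: the claim is that no filter can distinguish $M'$'s outputs from those of the \emph{original} $M$, not from $M$-with-a-random-pad. Your indistinguishability argument compares $M(p)\mathbin\Vert\pi_p$ to $M(p)\mathbin\Vert r$ with $r$ uniform, but neither of these is the distribution of $M(p)$, so the conclusion you need does not follow.

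The paper resolves this by taking the steganographic alternative you mention in your last paragraph and making it the entire construction. It reuses the recoverable-randomness sampling machinery ($\mathsf{Samp}$, $\mathsf{RecoverRand}$) from Theorem~\ref{thm:detection-impossible-timelock}: rather than running $M$ and then appending, $M'$ \emph{samples from $M$'s output distribution via $\mathsf{Samp}(M,R)$}, where $R$ is the (pseudorandom) string encoding the time-lock puzzle. Since $\mathsf{Samp}(M,R)$ with truly random $R$ is statistically close to $M$, and the puzzle makes $R$ computationally indistinguishable from random for any adversary below the time bound, $M'$'s outputs are indistinguishable from $M$'s with no change in format or length. The oracle $\mathcal{H}'$ then runs $\mathsf{RecoverRand}$ on the output, solves the puzzle (with $t=2^{\lambda}$, so filters of time up to $2^{\varepsilon\lambda}$---in particular, more than the LLM's $\poly(\lambda)$ runtime---are covered), and extracts the harmful message. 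In short, the paper's proof is literally the transformation $G\dashrightarrow G'$ from Theorem~\ref{thm:detection-impossible-timelock} applied with $M$ in the role of $G$ and $\mathcal{H}'$ in the role of the malicious LLM. Your proposal becomes correct once the appending step is replaced by this sampling step; as written it has a real gap at the embedding point.

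One more note: the formal statement (Theorem~\ref{thm:detection-impossible-output-formal}) assumes $H(M)\ge\poly(\lambda)$, so the ``for every $M$'' in the informal version already hides the entropy requirement you were worried about; it is not a defect unique to your approach.
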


This shows that there exist situations, in which a malicious LLM can generate outputs that are indistinguishable from safe responses but nevertheless have harmful consequences.

\subsection{Mitigation} 

Having established computational barriers for both prompt filters and output filters, we now examine a more permissive mechanism, which we term a \emph{prompt–mitigation filter}. 
Here, the filter can modify the prompt and pass it through to the LLM in addition to rejecting it. Note that innocent prompts can also be modified, but the implicit assumption is that the semantics of the prompts is maintained.
Mitigation gives the filter more power, which makes the goal of filtering or obtaining alignment potentially easier to achieve. We show a connection between the security of watermarking schemes and the impossibility of mitigation filtering.

\paragraph{Prompt mitigation and watermarking.}
A watermarking scheme \(W\) enables the creator of an LLM to prove that a given output originated from its model, even after an adversary has post-processed the text. Fully provable watermarking remains beyond the current state of the art: one typically demands that the adversary preserve \emph{some} semantic content—otherwise it could simply delete the text (and with it the watermark). We therefore consider watermarking against adversaries that apply edits from a permissible class \(E\) and run in time \(t\).  
The watermark should remain \emph{indistinguishable} to any such time-\(t\) adversary.

Our focus is on auto-regressive models, which generate text token by token, and on watermarking schemes that embed the mark by perturbing the model’s sampling randomness—a strategy explored by several recent proposals \citep{kirchenbauerwatermark,Kuditipudiwater,CGZ,golowichwatermark}.  

We show that:

\begin{theorem}[Impossibility of mitigation-filters, informal]
\label{thm:mitigation-impossible}
Let $W$ be a watermarking scheme as above that is resilient to edits from a class $E$.
For any high-entropy, innocent prompt generator \(G\), there exists an adversarial prompt generator $G'$ (with comparable runtime to \( G \)) and an LLM $M'$ such that 
$G'$ generates prompts that will induce harmful outputs from $M'$ even when $G'$'s outputs pass through an efficient prompt-mitigation filter using edits from the class $E$.
\end{theorem}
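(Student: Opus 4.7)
The plan is to reduce the existence of a useful prompt-mitigation filter $F$ to a successful watermark-removal attack against $W$, thereby contradicting its assumed resilience. At a high level, I will use the watermarking scheme to plant a hidden, hard-to-erase flag inside the prompts coming out of $G$, and have $M'$ turn harmful exactly when it recognises that flag. Concretely, I would build the adversarial prompt generator $G'$ by treating $G$ as (or wrapping it as) an auto-regressive sampler and invoking $W$ with a fresh secret key $k$ to perturb its per-token sampling randomness. By the indistinguishability guarantee of $W$ against any time-$t$ adversary that does not know $k$, the output distribution of $G'$ is computationally indistinguishable from $G$'s, so $G'$ inherits the high-entropy and innocence properties of $G$; moreover its runtime is comparable to $G$'s plus a small PRF-evaluation overhead.

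Next, I would construct $M'$ by hard-coding the key $k$ and the watermark detector into its weights. On input a prompt $x$, $M'$ first runs $W.\mathrm{Detect}_k(x)$; if the detector returns ``watermarked,'' $M'$ emits a fixed harmful string $y^*$, and otherwise it behaves exactly like some reference aligned model $M$ (which never produces harmful outputs). In particular, on prompts drawn from $G'$ the detector fires and $M'$ produces harmful output; on prompts drawn from $G$ (or any distribution not tied to $k$), the detector essentially never fires, so $M'$ is observationally identical to $M$.

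Now let $F$ be any efficient prompt-mitigation filter that applies only edits from the class $E$ and runs in time at most $t$. I would argue two things. First, $F$ cannot strip the watermark: if, with non-negligible probability, $F(G'(\cdot))$ yielded a string on which $W.\mathrm{Detect}_k$ failed, then $F$ composed with $G'$ would be exactly the kind of time-$t$, $E$-bounded adversary against which $W$ is assumed resilient, contradicting indistinguishability. Hence with high probability $M'(F(G'(\cdot)))$ is the harmful output $y^*$. Second, $F$ cannot escape by simply rejecting: since the distributions of $G$ and $G'$ are computationally indistinguishable to $F$, any rejection behaviour on $G'$-prompts would transfer to $G$-prompts at essentially the same rate, violating the requirement that $F$ pass innocent prompts with high probability.

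The main obstacle I expect is matching the interface of the watermarking scheme—which is defined for auto-regressive text models—to an \emph{arbitrary} high-entropy prompt generator $G$. If $G$ is itself auto-regressive, as is typical for LLM-based prompt generators, the construction above goes through directly; otherwise, I would need a small lemma showing that one can emulate $G$'s distribution by an auto-regressive sampler to which $W$ applies, and that $W$'s resilience against $E$-edits carries over to the resulting prompt distribution without blowing up the runtime of $G'$. A secondary subtlety is being careful about what ``innocent'' means: $G'$'s prompts are statistically close to $G$'s and thus innocent relative to the baseline model $M$, but they induce harm in the deployed $M'$; the definition of innocence in the theorem must therefore be anchored to a reference model, not to $M'$, for the statement to be both non-vacuous and consistent with the impossibility claim.
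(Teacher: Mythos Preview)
Your proposal is correct and takes essentially the same approach as the paper: set $G' := \mathsf{Wat}_{\mathsf{sk}}$ applied to $G$ with a shared secret key, let $M'$ run $\mathsf{Detect}_{\mathsf{sk}}$ and emit a fixed harmful output when it fires (else defer to a benign reference $M$), and then map the three required properties directly to completeness (robustness to $E$-edits), soundness, and undetectability of the watermarking scheme. One minor slip: where you write ``contradicting indistinguishability'' after arguing that $F$ cannot strip the watermark, the property actually being invoked is the \emph{completeness/robustness} of $W$ against $E$-edits, not undetectability; and the paper sidesteps your ``main obstacle'' by simply assuming $G$ is itself a language model.
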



\subsection{Intelligence, Judgment and Agency}

Our results can be interpreted as saying that it is impossible to achieve alignment by separating intelligence and judgment.
To illustrate this message we explain now how our paper relates to a recent proposal for achieving alignment.

\paragraph{Decoupling Intelligence and Agency as a path to AI alignment?} \cite{joshuaAIScientist} propose an approach to AI alignment by developing non-agentic “Scientist AIs”—systems that explain the world through Bayesian inference without taking actions or pursuing goals.
They argue that, unlike agentic AIs, which can become deceptive or self-preserving, Scientist AIs are designed to be interpretable, uncertainty-aware, and inherently trustworthy. 
The suggested applications of Scientist AIs are advancing scientific progress and monitoring agentic systems. 

\emph{Agency} is a complicated term to define. 
\cite{joshuaAIScientist} defines it as the ability to change the world to
achieve goals and mentions three pillars of agency: \emph{affordances, goal-directedness, and intelligence}. We refer the reader to Section~\ref{sec:reflections} for future directions for research related to agency and alignment.

Our results point to limitations in the proposed use of Scientist AIs as safety guardrails.  
First, the authors suggest using a Scientist AI \( A \) to generate candidate experiments and another, more powerful Scientist AI \( B \) to assess their safety. However, Theorem~\ref{thm:detection-impossible-output} shows that even a stronger \( B \) may fail to detect all dangerous outputs.  
One might argue that a Scientist AI behaves differently from the LLM constructed in the proof of the theorem, being designed for interpretability and trustworthiness---but if that is true, the need for a second filtering AI is unclear.
Theorem~\ref{thm:detection-impossible-output} also indirectly \textbf{implies the need for agency}, as in some situations, there is no other way of judging the level of harm of an action than executing it in the world.

Second, if a Scientist AI is used to filter prompts for a more powerful agentic AI, Theorems~\ref{thm:detection-impossible-timelock},~\ref{thm:detection-impossible-sk}, and~\ref{thm:detection-impossible-pk} show it cannot reliably block all harmful prompts. Section 3.1.5 of \citet{joshuaAIScientist} acknowledges that a guardrail AI must match the intelligence of the system it oversees. But if so, it is unclear why one would use the untrusted AI at all, given similar capabilities and likely similar computational cost of the Scientist AI.

\section{Technical Overview}
\label{sec:overview}

\paragraph{Preliminaries.} For $n \in \N$ we denote $\{0,1,\dots,n-1\}$ by $[n]$.\footnote{$[n]$ will be denoted by $\Z_n$ in the appendix.} A language model $M$ is a deterministic algorithm that takes as input a prompt $\mathsf{prompt}$ and a previous output by the model $z = (z_0,\dots,z_{i-1})$ and outputs a probability distribution over a token set $\mathcal{T}$. To sample a response of $M$ one repetitively samples from the probability distribution output by the model.

\paragraph{Construction.}
At a high level, all our results (Theorems~\ref{thm:detection-impossible-timelock},~\ref{thm:detection-impossible-sk} and~\ref{thm:detection-impossible-pk}) on hardness of prompt detection filtering share the following proof structure:
\begin{enumerate}
    \item Design a pair of algorithms $\mathsf{Samp}$ and $\mathsf{RecoverRand}$, where  $\mathsf{Samp}$ takes as input an auto-regressive model\footnote{We assume that the innocent prompt generator $G$ is a language model.} $G$ (see Appendix~\ref{apx:prelim} for a formal definition) and randomness $R$ and produces a prompt $z = (z_0,z_1, \dots)$; and  $\mathsf{RecoverRand}$ on input $z$ (and knowing $G$) recovers $R$. 
    The algorithms are formally defined in Algorithms~\ref{alg:samp} and~\ref{alg:rec}.
    \item Construct a function $g$ such that:
    \begin{enumerate}
        \item The LLM can compute $g$ on inputs of its choice.
        \item The adversarial prompt generator $G'$ can sample pairs $(x,g(x))$ where $x$ is (close to) uniformly random.
        \item The filter cannot distinguish between pairs $R = (x,\mathsf{Enc}(g(x),m))$ generated by $G'$ and pairs $(x,y)$ where $x$ and $y$ are  uniformly random and $y$ is  independent of $x$, for any $m$. We think of $m$ as a malicious prompt/message that is encrypted by an encryption algorithm $\mathsf{Enc}$ using a key $g(x)$.
        \item Given $g(x)$ and a ciphertext $c = \mathsf{Enc}(g(x),m)$, $m$ is easy to recover (using a decryption algorithm $\mathsf{Dec}$).
    \end{enumerate}
\end{enumerate}

In the construction of Theorem~\ref{thm:detection-impossible-timelock}, we design an adversarial prompt generator $G'$ and an LLM $M'$.
$G'$ works as follows:
\begin{enumerate}
    \item Sample $R \gets (x,\mathsf{Enc}(g(x),m))$, where $m$ is a prompt that can induce a harmful output from the LLM.
    \item Return $z \gets \mathsf{Samp}(G,R)$.
\end{enumerate}
$M'$ works as follows:
\begin{enumerate}
    \item Recover $R \gets \mathsf{RecoverRand}(G,z)$.
    \item Parse $R$ as $(x,\mathsf{Enc}(g(x),m))$ and compute $g(x)$.
    \item Recover $m$ from $c = \mathsf{Enc}(g(x),m)$, i.e., run the decryption algorithm $\mathsf{Dec}$ on $c$ using $g(x)$.
    \item \say{Execute}\footnote{We imagine there are many possible harmful modes that can be induced from $M'$. We model that by a collection of harmful prompts $m$ that can be \say{executed} by $M'$. Formally (see Theorem~\ref{thm:detection-impossible-timelock-formal}) we say that for every model $M$ there exists $M'$ such that for every $m$, $m$ can be \say{hidden} in an innocent looking prompt (via $\mathsf{Enc}(g(x),m)$) such that $M'$ will recover and execute $m$.} the prompt $m$.
\end{enumerate}

It is easy to see that the filter cannot distinguish prompt $z$  from an innocent prompt sampled by $G$ , because $R$ is indistinguishable from random according to property (2c) above. 
However, $M'$ is able to recover the randomness, compute $g(x)$, recover $m$, and execute it.

The formal implementation of $G'$ is given in Algorithm~\ref{alg:harmincite} and the implementation of $M'$ is given in Algorithm~\ref{alg:maliciousllm}.
We proceed to show how to construct $g$ for the result of Theorem \ref{thm:detection-impossible-timelock}. 

\subsection{Construction of $g$}
\label{subsec:overview-g-construction}

\paragraph{Moderately hard $g$ (Theorem~\ref{thm:detection-impossible-timelock})}
Our construction is based on time-lock puzzles.
More concretely, we use the RSW time-lock that relies on the moderately hard repeated exponentiation function:
\begin{align*}
    f(n,r) = r^{(2^t)} \pmod{n},
\end{align*}
where $n$ is a product of two (random) primes $p$ and $q$, i.e., $n = p q$, $r$ is a random integer between $0$ and $n-1$ and $t$ is a fixed large integer. The assumption is that without knowing the factorization of $n$, computing $f$ takes time that scales linearly with $t$ (and moreover this computation cannot be parallelized). This function has several nice properties right off the bat. First, we can choose $t$ so that computing $f$ is feasible for the LLM (which can run computations that take more than $t$ steps), but not for a filter whose runtime is bounded to be significantly smaller than $t$. Further, it is easy to generate ``solved instances'': the adversarial prompt generator generates $n$, which is why it can know the primes $p$ and $q$. It can therefore compute $e = 2^t \pmod {\phi(n)}$ where we know that $r^e = r^{2^t} \pmod{n}$. Thus, the adversarial prompt generator only needs to compute a small number of exponentiations (only $\log(e)$).

Hence, $f$ seems like a natural fit to use for constructing the function $g$, but several issues arise. First, the composite $n$ is not a random string (which is what we need), rather it is a product of two large primes. 
However, we know that $n$ is uniformly random in the set of products of two primes $p$ and $q$ chosen at random from a set $[1,w]$. 
By the prime number theorem, the number of primes in that range is $\Theta(w / \log(w))$. 
Thus, $n$ is uniformly random in a domain of size  $S = \Theta((w / \log(w))^2)$. 
$G'$ hashes $n$ to a range of size $\Theta(\eps^{2} \cdot S)$ using a pairwise independent hash function $h$. 
By the leftover hash lemma the joint distribution of $(h,h(n))$ is $\eps$-close to uniformly random \citep{HastadILL99,Vadhan12}. 
It is straightforward to represent the random integer $r$ as a random bit string, so the entire input $x = (h,h(n),r)$ can be (close to) uniformly random, as we wanted. 
We remark that the LLM will need to recover $n$ from $(h,h(n))$: this is possible because the pairwise-independent hash function can be inverted efficiently (e.g., it can be a linear function, where inverses can be found using Gaussian elimination). 
Note however that there will be more than one inverse (because the hash function is shrinking), and this raises technical issues in the construction: the LLM will need to enumerate over all the possibilities. 
The final issue is that the output $r^{(2^t)} \pmod{n}$ is moderately hard for the filter to compute, but might not be indistinguishable from uniformly random (which is what we wanted). 
We handle this using standard techniques from the cryptography literature (namely using hardcore bits of $f$ \citep{GoldreichL89,Goldreich2001}). See the appendix for the full details of this construction.

\paragraph{Other time-lock puzzles.} 
Our construction is based on the RSW time-lock puzzle but we consider proving Theorem~\ref{thm:detection-impossible-timelock} assuming general time-lock puzzles.
See Remark~\ref{rem:exttoTLP} in Appendix~\ref{apx:promptdetection} for details.

\paragraph{Secret key (Theorem~\ref{thm:detection-impossible-sk}) and public key (Theorem~\ref{thm:detection-impossible-pk}) constructions.} 
We focus on the impossibility results for moderately weak filters, i.e., weaker than the LLM in this overview. However, if we allow collaboration between the LLM and the prompt generator, it is possible to prove impossibility even against stronger filters. 
If we allow the LLM to publish a public key (and allow the adversarial prompt generator to depend on it) as in Theorem \ref{thm:detection-impossible-pk}, then we can use any trapdoor function to instantiate $\mathsf{Enc}$ function whose output will be pseudorandom to any polynomial-time filter that does not know the secret key. Here, we let the adversarial prompt generator $G'$ sample $R=\mathsf{Enc}(\mathsf{pk},m)$, where $\mathsf{pk}$ is a public key, and returns the prompt $z \gets \mathsf{Samp}(G,R)$. 
The LLM recovers $R$ from $z$ and uses its matching secret key $\mathsf{sk}$ to recover $m$ from $R$. 
If we allow the LLM and the adversarial prompt generator to have shared randomness (or rather shared secret key $\mathsf{sk}$) as in Theorem \ref{thm:detection-impossible-sk}, then we can construct $G'$ without making any assumptions: we use the shared randomness as a one-time pad to create a single special prompt that leads to a harmful output. Namely, $G'$  sets $R = \mathsf{sk} \oplus m$ for a prompt $m$ which can induce a harmful output from the LLM and returns $z \gets \mathsf{Samp}(G,R)$.  Alternatively, under the (fairly relaxed) cryptographic assumption that one-way functions exist, even a short shared random seed can be used to generate an \emph{arbitrary (polynomial) number} of harmful prompts that are indistinguishable from innocent ones.

\begin{algorithm}
\caption{\(\mathsf{Samp}(G, R)\) \Comment{\(G\): a prompt generator; \(R \in \{0,1\}^*\): the randomness}}\label{alg:samp}
\begin{algorithmic}[1]
\State \(i \gets 0\), \(j \gets 0, q \gets (), k \gets 1 \)
\While{\(\mathsf{done} \not\in (z_0, \dots, z_{i-1})\)}
    \State \(p_i \gets G(z_0, \dots, z_{i-1})\)
    \State \(d \gets  \argmin_{d' \in \N} [k \cdot 2^{d'} \geq 2^P] \) \Comment{\(P\) is a precision parameter to be set later}
    \State \(k \gets 2^d \cdot k \)
    \State \(q \gets q \ \| \ R[j: j+d]\) \Comment{$\|$ is a concatenation}
    \State \(j \gets j+d\)
    
    \If{\(q/k < p_i\)}
        \State \(z_i \gets 1\)
        \State \(k \gets \ceil{p_i \cdot k}\)
    \Else
        \State \(z_i \gets 0\)
        \State \(q \gets q - \floor{p_i \cdot k} \)
        \State \(k \gets k - \floor{p_i \cdot k} \)
    \EndIf
    \State \(i \gets i + 1\)
\EndWhile
\end{algorithmic}
\end{algorithm}

\begin{algorithm}
\caption{\(\mathsf{RecoverRand}(G, z, L)\) \Comment{\(G\): a prompt generator; \(z \in \{0,1\}^*\); \(L\): a length of the randomness required}}\label{alg:rec}
\begin{algorithmic}[1]
\State \(j \gets 0, a \gets 0, b \gets 1, k \gets 1 \)
\For{\(i \in [\mathsf{len}(z)]\)}
    \State \(p_i \gets G(z_0, \dots, z_{i-1})\)
    \State \(d \gets  \argmin_{d' \in \N} [k \cdot 2^{d'} \geq 2^P] \) \Comment{\(P\) is a precision parameter to be set later}
    \State \(k \gets 2^d \cdot k \)
    \State \(a \gets a \ \| \ 0^t, b \gets b \ \| \ 1^t\) \Comment{$\|$ is a concatenation}
    \State \(\mathsf{mid} \gets a + (b-a) \cdot p_i/k\)
    \If{\(z_i = 1\)}
        \State \(a \gets \mathsf{mid}\)
        \State \(k \gets \ceil{p_i \cdot k}\)
    \Else
        \State \(a \gets \mathsf{mid}\)
        \State \(k \gets k - \floor{p_i \cdot k} \)
    \EndIf
    \If{\(\mathsf{len}(\mathsf{CommonPrefix}(a,b)) \geq L\)}
        \Return \(\mathsf{CommonPrefix}(a,b)\)
    \EndIf
\EndFor
\end{algorithmic}
\end{algorithm}

\subsection{Recoverable-Randomness Sampling}
\label{sec:overview-recoverable-randomness
}
In our construction, it is key that the distribution of $G'$ is indistinguishable from $G$ and that it is possible to effectively recover the randomness used by $G'$ from its outputs.
To achieve this, we design an algorithm $\mathsf{Samp}$ (Algorithm~\ref{alg:samp}) that takes as input a generator $G$ and randomness $R$ and produces $z = (z_0,z_1, \dots)$; and an algorithm $\mathsf{RecoverRand}$ (Algorithm~\ref{alg:rec}) that given $z$ and $G$ recovers $R$.
We assume that the set of tokens over which $G$ operates equals $\{0,1\}$.
This is possible because of a simple reduction (see Appendix~\ref{apx:prelim}).

The high-level idea of our scheme is to maintain \say{residual randomness}.
Imagine, that at some point in the generation process, $G(z_0, \dots, z_{i-1}) \rightarrow 1/4$.
Then to generate $z_i$ accurately $2$ bits of randomness are needed, i.e., if the bits are $00$ we set $z_i = 1$ and we set $z_i = 0$ if the bits are $00,01,11$.
If $z_i = 0$ then the recovering algorithm might not know which of the three options for the randomness determined $z_i$.
To address this issue we \say{reuse} this randomness in the generation of the next tokens.
More concretely, if $z_i = 0$ then we have $\log_2(3)$ bits of residual randomness that we can use in generating $z_{i+1},z_{i+2},\dots$.
If done properly this \say{reusing} process will allow perfect recovery of the randomness used.
Intuitively, because all of the randomnness will be used.

Our scheme, defined in Algorithms~\ref{alg:samp} and~\ref{alg:rec}, works as follows.
At all times, the sampling algorithm $\mathsf{Samp}(G,R)$ maintains a precision range $k$ and a value $q \in [k]$.
The value $q$ is created on the fly using the randomness $R$ and will determine how tokens are sampled.
Intuitively, $q$ maintains the \say{residual randomness}.
$\mathsf{Samp}$ builds a response $(z_0,\dots,z_{i-1})$ iteratively, where in the $i$-th step it: 
\begin{enumerate}
    \item Calls $p_i \gets G(z_0, \dots, z_{i-1})$.
    \item If $k < 2^P$ it multiplies $k$ by an appropriate power $2^t$ so that $k \geq 2^P$,\footnote{$P$ is a precision parameter that will determine the closeness of the generated distribution to the distribution of $G$.} and increases the precision of $q$ by concatenating it with the not yet used randomness from $R$, i.e., \(q \gets q \ \| \ R[j: j+t]\). 
    This ensures that $q$ represents an element of $[k]$.
    \item If \(q/k < p_i\) it sets $z_i \gets 1$ and sets $z_i \gets 0$ otherwise.
    Additionally, it updates $q$ and $k$ so that the \say{residual randomness} can be used later on.
    Intuitively, if $q/k < p_i$ then we want to keep the $\approx \log_2(p_i \cdot k)$ bits of randomness.
    To do that we update $k \gets k - \floor{p_i \cdot k}$, which implicitly represents that $q$ is now a random value from $[k - \floor{p_i \cdot k}]$.
    If $q/k > p_i$ the values are updated accordingly.
\end{enumerate}
To recover the randomness $\mathsf{RecoverRand}(G,z)$ works as follows.
It maintains bitstrings $a,b$ that informally are \say{lower and upper-bounds} on the randomness $R$ being recovered. 
More concretely, in the first step $\mathsf{RecoverRand}$ knows that the prefix of randomness $R$ is between $0^P$ and $1^P$.
If $p_0 \gets G()$ and $z_0 = 1$ then it knows that the prefix of $R$ is between $a = 0^P$ and $b = p_0 \cdot 2^P$ written in binary on $P$ bits.
By construction, the common prefix of $a,b$ is equal to the prefix of $R$ - this will allow to recover $R$.
What is left to show is that it is possible to maintain $a$ and $b$ correctly.
Observe that it is possible to update $k$ exactly like $\mathsf{Samp}$ does it because $k$ changes depending only on the value of $z_i$.

\begin{algorithm}
\caption{\(\mathsf{HarmIncite}(G,m,t)\) \Comment{$G$: innocent prompt generator; $m$: harmful prompt; $t$: time parameter}}\label{alg:harmincite}
\begin{algorithmic}[1]
\State Sample two large primes \(p,q\)
\State Set $n \gets p \cdot q$
\State Compute $e \gets 2^t \pmod{(p-1)(q-1)}$ \Comment{$\phi(n) = (p-1)(q-1)$}
\State Sample $r \gets [n]$
\State Compute $s \gets r^e \pmod{n}$
\State Sample \(h\) a pairwise independent hash function
\State \(c \gets \mathsf{Enc}(s,m)\) \Comment{For $\mathsf{Enc}$ defined in the Appendix~\ref{apx:timelock}}
\State \(R \gets h \ \| \ h(n) \ \| \ r \ \| \ c\) \Comment{Concatenate components}
\State \Return \(z \gets \mathsf{Samp}(G, R)\) \Comment{Sample output so that randomness is recoverable}
\end{algorithmic}
\end{algorithm}

\begin{algorithm}
\caption{\(\mathsf{MaliciousLLM}(M, z, G, t)\) \Comment{\(M\): LLM; \(z\): input; \(G\): weak innocent prompt generator; $t$: time parameter}}\label{alg:maliciousllm}
\begin{algorithmic}[1]
\State \(R \gets \mathsf{RecoverRand}(G, z)\)
\State Parse \(R\) as \(h \ \| \ u \ \| \ r \ \| \ c\)
\ForAll{\(\hat{n} \in h^{-1}(u)\)}
    \State \(\hat{s} \gets r\)
    \For{\(i \in [t]\)}
        \State \(\hat{s} \gets \hat{s}^2 \pmod{\hat{n}}\)
    \EndFor
    \State \(m \gets \mathsf{Dec}(\hat{s}, c)\) \Comment{$m$ is the malicious prompt extracted from ciphertext $c$}
    \If{\(m \neq \bot\)} \Comment{$\mathsf{Dec}$ can detect if the decryption was successful}
        \State \Return \(M(m)\)
    \EndIf
\EndFor
\State \Return \(M(z)\)
\end{algorithmic}
\end{algorithm}

\subsection{Output Filtering}

\paragraph{Setup.}
Before elaborating on the construction, we describe the setup in more detail.  
A \emph{harmfulness oracle} is a function \(\mathcal{H} : \mathcal{T}^* \rightarrow [0, \infty)\).  
Given an output \(z \in \mathcal{T}^*\) of a language model, \(\mathcal{H}(z)\) represents its harmfulness: \(\mathcal{H}(z) = 0\) indicates that \(z\) is benign, while higher values denote greater harmfulness.
During inference, neither the LLM nor the filter has query access to \(\mathcal{H}\).  
The oracle \(\mathcal{H}\) can be interpreted as the environment or nature ``evaluating'' the harmfulness of \(z\).  
However, the filter (and the LLM) may have received training examples of both harmful and benign outputs, or an approximate description of \(\mathcal{H}\), during training.

\paragraph{Construction.}
We leverage techniques used in the proof of Theorem~\ref{thm:detection-impossible-timelock} to prove Theorem~\ref{thm:detection-impossible-output}.
Let $M$ be an LLM that doesn't produce harmful outputs.

To define $M'$ we apply the same process as the one used to transform $G \dashrightarrow G'$ but applied to $M$.
When we apply the transformation, we set $t$ of the RSW time-lock to be exponential in the time complexity of $M$.
This ensures that the outcome of the transformation, i.e.,  $M'$, is such that its outputs are indistinguishable from outputs of $M$ for all polynomial, in the running time of $M$, filters.

To define $\mathcal{H}'$ we apply the same process as the one used to transform the LLM in the proof of Theorem~\ref{thm:detection-impossible-timelock}.
More concretely, $\mathcal{H}'(z)$ \say{runs}\footnote{$\mathcal{H}'$ is a harmfulness oracle and not an algorithm, so formally it does not run.} as follows: it recovers the randomness used by $M'$, i.e., $R \gets \mathsf{RecoverRand}(M',z)$.
If $R$ contains a Time-Lock Puzzle, $\mathcal{H}'$ solves it to obtain an output $m$ and \say{declares} that $z$ is harmful (with harmfulness level determined by $m$).

The sketch of why this construction is correct is as follows.
Note that similarly as for $G$ and $G'$ in the case of Theorem~\ref{thm:detection-impossible-timelock}, $M'$ outputs are indistinguishable from the outputs of $M$ for all polynomial-time filters.
Moreover, $M'$ on every harmful $m$, \say{embeds} it in its output $z$, $\mathcal{H}'$ \say{recovers} $m$ from $z$ and \say{declares} that $z$ is harmful with the harmfulness level determined by $m$.

\section{Related Work}
\label{sec:related-work}
\textbf{Alignment.} Making AI models aligned with human preferences is a central concern of contemporary AI research \citep{amodei2016concrete, leike2018scalable, hendrycks2021unsolved, ji2023ai}. In order to guide model behavior, technical efforts have focused on techniques like Reinforcement Learning from Human Feedback (RLHF, \citealp{ christiano2017deep}), further improved by Direct Preference Optimization (DPO) \citep{rafailov2023direct}. However, a growing body of work suggests that achieving robust alignment is profoundly difficult: Researchers have highlighted issues ranging from the inherent ambiguity in specifying human preferences \citep{gabriel2020artificial, zhi2024beyond, sorensen2024roadmap}, to problems like shallow alignment induced by properties of the alignment algorithms \citep{jain2023mechanistically, kotha2023understanding, lee2024mechanistic} and the alignment data \citep{qi2024safety}. For instance, \citet{lee2024mechanistic} show that the DPO alignment algorithm only teaches a model to bypass toxic ``regions’’ without removing toxic knowledge gained from the pre-training stage, making it possible to elicit this toxic knowledge with ``jailbreaks’’---changes to the prompt that lead a model from answer-refusal to generating harmful information. Relatedly, \citet{qi2024safety} argue that safety alignment often takes shortcuts by primarily adapting a model's generative distribution over just its first few output tokens, making them vulnerable to various attacks. The difficulty in robustly aligning models at a deep representational level underscores the need for complementary external mechanisms to detect or prevent harmful model outputs. This is in line with regulatory frameworks such as the EU AI Act, which requires AI systems in the high-risk category to implement an effective risk management system (see Article 9 \citealp{euai}). 

\textbf{Filters.} In response to the need for safer AI systems, practical filtering mechanisms
have been developed and deployed. 
For instance, model developers like Meta have introduced tools such as Llama Guard, designed to classify content as safe or unsafe \citep{inan2023llama}. Similarly, cloud service providers like Microsoft Azure offer content filtering capabilities within their AI service implementations \citep{microsoft_azure_ai_content_safety}, and companies like Nvidia also provide solutions aimed at moderating AI-generated content \citep{nvidia_nemo_guardrails}. These approaches represent an ongoing evolution, with classifiers and filters becoming increasingly sophisticated. However, the development of jailbreaks poses a consistent challenge to successful filtering and model alignment in general \citep{andriushchenko2024jailbreaking, chao2024jailbreakbench, xu2024comprehensive}. Against the background of this dynamic co-evolution of attack and defense, our work explores the computational intractability of  filtering approaches under cryptographic assumptions. 

\textbf{Time Lock Puzzles.}
It is usually desired that cryptographic schemes cannot be broken by any adversary. An exception is the notion of cryptographic puzzles that can be solved in some preset amount of time (or space) but not faster. Examples of such puzzles \citep{DN92, rivest1996time} have been used as a way to combat spam or send messages into the future, forcing the spammer (or the future reader of messages) to invest the preset amount of time. The notion of time-lock puzzles introduced by \citet{rivest1996time} following May's time-released cryptography \citep{May93} is especially intriguing in that it allows a user to quickly encrypt a message in such a way that it can be read only after a longer but set number of time steps. Informally, the sender generates a puzzle with
a solution $s$ that remains hidden from adversaries that run in time significantly less than $t$, including parallel adversaries with polynomially many processors. The original \citep{rivest1996time} candidate was based on the assumption that exponentiation modulo an RSA integer is an ``inherently sequential'' computation. More recently \citet{BGJPVW16} constructed time-lock puzzles based on the necessary assumption that worst-case Non-Parallelizing Languages exist-decidable in time $t$, but hard in the worst case for circuits of depth significantly smaller than $t$-and that indistinguishability obfuscation exists \citep{BGL+15, CHJV15}.
A culmination of a series of follow-up works \citep{giulioTLP, bitanskyTLP, giulioTLPfinal} managed to construct time-lock puzzles based on a version of the Learning With Errors \citep{regevLWE} (LWE) assumption. 
One advantage of basing time-lock puzzles on LWE is that the other constructions are broken by quantum algorithms, while LWE is a candidate for post-quantum secure cryptography.

\section{Reflections}\label{sec:reflections} 

Our work builds on a growing body of research highlighting the inherent unpredictability and misalignment risks of complex AI systems. By formalizing these insights, we provide a theoretical foundation for understanding when and why alignment may remain an elusive goal, regardless of technical advancements or regulatory safeguards. The implications of our impossibility theorems are profound: they call into question the feasibility of creating AI systems that are both powerful and wholly safe, at least using a certain natural set of filtering tools. This emphasizes the need to reconsider how society audits and handles these systems.  

\paragraph{Beyond Cartesian Duality.}
Our results show that merging intelligence and judgment might be necessary for achieving AI alignment.
It is in contrast with what some other works suggest, e.g., \cite{joshuaAIScientist} argues for separating intelligence from agency.
We, on the other hand, argue that agency is indispensable in some situations.

It would be interesting to investigate how our findings about AI alignment relate to a recent paradigm shift in the study of the mind known as \emph{4E cognitive science} \citep{gallagher4E}, which emphasizes the \emph{embodied, embedded, enactive, and extended} nature of cognition.
This paradigm stands in contrast to the Cartesian mind-body duality and argues that the unit of explanation of cognition is not just neuronal processes in the brain but the whole complex of brain-body-environment.

\section*{Acknowledgements}
SB is supported by the DAAD programme Konrad Zuse Schools of Excellence in Artificial
Intelligence, sponsored by the German Federal Ministry of Education and Research. OR is supported by the Simons Foundation Collaboration on the Theory of Algorithmic Fairness, the Sloan Foundation Grant 2020-13941, and the Simons Foundation Investigators Award 689988.

\bibliographystyle{abbrvnat} 
\bibliography{references.bib}  

\newpage

\newpage

\appendix

\section{Preliminaries}\label{apx:prelim}
For $n\in \N$ we denote $\{0,1,\dots,n-1\}$ by $[n]$ and sometimes by $\Z_n$.
For $n \in \N$ we denote by $\phi(n)$ the Euler's totient function, i.e, it counts the positive integers in $[n]$ that are relatively prime to $n$.
The multiplicative group $\Z_n^*$ consists of the set of natural numbers that are smaller than $n$ and relatively prime to it, and the operation is multiplication mod $n$.
We denote by $\log$ the logarithm with base two.
We denote by $x \,||\, y$ the concatenation of the vectors $x$, $y$, and by $\mathsf{len}(s)$ the length of the sequence $s$.
For a sequence $s = (\ldots, s_i, \ldots)$ we denote by $s[i : j]$ the sub-sequence $(s_i, \ldots, s_{j-1})$.
For a set $S$ we denote by $\gets S$ the process of choosing a uniformly random element of $S$.
For an algorithm $A$ we also write $\gets A$ to denote the (potentially random) element returned by $A$.
Let $\lambda$ be the security parameter, we denote by $\text{negl}(\lambda)$ any function that is in $O(1/p(\lambda))$ for every polynomial $p(\cdot)$. As is standard in Cryptography research, we think of $\lambda$ as the ``key size'', and of running times that are super-polynomial in $\lambda$ as ``infeasible''.  

\subsection{Language Models}

Some notions in this subsection are adapted from \cite{CGZ}.

\begin{definition}\label{def:model} A language model $M$ over token set $\mathcal{T}$ is a deterministic algorithm that takes as input a prompt $\mathsf{prompt}$ and tokens previously output by the model $z = (z_1, \ldots, z_{i-1})$, and outputs a probability distribution $p_i = M(\mathsf{prompt}, z)$ over $\mathcal{T}$.
\end{definition}

\begin{definition} A language model's response to $\mathsf{prompt}$ is a random variable $M(\mathsf{prompt}) \in \mathcal{T}^*$ defined algorithmically as follows. We begin with an empty list of tokens $z = ()$. As long as the last token in $z$ is not $\mathsf{done}$, we draw a token $z_i$ from the distribution $M(\mathsf{prompt}, z)$ and append it to $z$.
\end{definition}

\subsection{Entropy and Empirical Entropy}

For a distribution $D$ over a finite set $X$, the Shannon entropy is
\[
H(D) = \mathbb{E}_{z \sim D}[-\log D(z)].
\]
The empirical entropy of $z$ in $D$ is $-\log D(z)$.

\begin{definition}
For a language model $M$, a prompt $\mathsf{prompt}$, and a response $z \in \mathcal{T}^*$, the empirical entropy of $M$ responding with $z$ to $\mathsf{prompt}$ is:
\[
H_e(M, \mathsf{prompt}, z) := -\log \Prob[M(\mathsf{prompt}) = z].
\]
\end{definition}

Note that
\[
H(M(\mathsf{prompt})) = \mathbb{E}_z[H_e(M,\mathsf{prompt},z)].
\]


\subsection{Time-Lock Puzzles}

Time-lock puzzles, introduced by \citet{rivest1996time}, provide a mechanism for sending messages ``to the future''. The core idea is to enable a sender to quickly generate a cryptographic puzzle whose solution is guaranteed to remain hidden until a specified amount of computational time has elapsed. 
Additionally, this security must hold even against highly parallel adversaries equipped with polynomially many processors. 
The following is a definition of a Time-Lock Puzzle adapted from \cite{shafitimelocks}.

\begin{definition}[Time-Lock Puzzle]\label{def:tlp}
A \emph{Time-Lock Puzzle} (TLP) is defined over a difficulty parameter $t \in \mathbb{N}$ and a security parameter $\secp$. 
A Time-Lock Puzzle consists of a pair of algorithms $\mathsf{Puzzle} = (\mathsf{Gen}, \mathsf{Sol})$ with the following properties:

\begin{itemize}
    \item \textbf{Puzzle Generation:}
    \[
    Z \gets \mathsf{Gen}(t, s)
    \]
    is a probabilistic algorithm that takes as input a difficulty parameter $t$ and a solution $s \in \{0,1\}^\lambda$ and outputs a puzzle $Z$.
    \item \textbf{Puzzle Solving:}
    \[
    s \gets \mathsf{Sol}(Z)
    \]
    is a deterministic algorithm that takes a puzzle $Z$ and returns a solution $s$.

    \item \textbf{Completeness:} For all difficulty parameters $t \in \mathbb{N}$ and $s \in \{0,1\}^\secp$:
    \[
    \mathsf{Sol}(\mathsf{Gen}(t,s)) = s.
    \]

    \item \textbf{Efficiency:}
    \begin{itemize}
        \item $\mathsf{Gen}(t, s)$ runs in time $\mathsf{poly}(\log t, \lambda)$.
        \item $\mathsf{Sol}(Z)$ runs in time $t \cdot \mathsf{poly}(\lambda)$.
    \end{itemize}

    \item \textbf{Sequentiality (Time-Lock Property):} 
    There exists $\varepsilon < 1$ and a polynomial $\underline{t}(\cdot)$, such that for every polynomial $t(\cdot) \geq \underline{t}(\cdot)$ and every polysize adversary $\mathcal{A} = \{\mathcal{A}_\lambda\}_{\lambda \in \mathbb{N}}$ of depth $\mathrm{dep}(\mathcal{A}_\lambda) \leq t^{\varepsilon}(\lambda)$, there exists a negligible function $\mu$, such that for every $\lambda \in \N$, and every pair of solutions $s_0, s_1 \in \{0,1\}^\lambda$:
        \[
        \Pr\Big[
        b \leftarrow \mathcal{A}_\lambda(Z) : b \leftarrow \{0,1\},
        Z \leftarrow \mathsf{Gen}(t(\lambda), s_b)
        \Big]
        \leq \frac{1}{2} + \mu(\lambda).
        \]
\end{itemize}
\end{definition}

\subsubsection{The RSW Construction}

\citet{rivest1996time} proposed a candidate construction based on the conjectured sequential nature of modular exponentiation. Their scheme uses the fact that repeated squaring modulo an RSA integer is believed to be inherently sequential.

$\mathsf{Gen}(t)$ proceeds as follows:

\begin{enumerate}
    \item Samples two primes $p, q$ from $[1,w]$, for a parameter $w$, and set $n = pq$.
    \item Samples $r \gets \mathbb{Z}_n^*$, where $\Z_n^* = \{1,2,\dots, n-1\}$, and computes
    \[
        s \gets r^e \bmod n,
    \]
    where $e = 2^{t}$ is a tower of repeated squarings (i.e., $t$ squarings of $r$).
    To compute $r^e \bmod n$ it first evaluates $\phi(n) = (p-1)(q-1)$ and reduces the exponent $e' \gets e \bmod \phi(n)$.
    Then $s = r^{e'} \bmod n$, which can be computed in only $O(\log(e'))$ modular exponentiations.
    \item Defines $Z = (n, r)$ to be the puzzle and $s$ its solution. Returns $(Z,s)$.
\end{enumerate}

\begin{note}
$\mathsf{Gen}$ can be adapted to accept an additional input $s$ so that it adheres to the schema from Definition~\ref{def:tlp}. 
To do that, one would use a $\mathsf{root}$ function (Definition~\ref{def:root}) that for $x \in \Z_n^*$ computes \say{$\sqrt{x}$} so that it is possible to compute $r$ from $s$. However, there are some technical details that need to be addressed as $\mathsf{root}$ is well defined only on a subgroup of $\Z_n^*$ (see Fact~\ref{fact:residues}). We implicitly address these issues in Lemma~\ref{lem:hardcore}. We chose to use this formulation of the RSW construction for simplicity of exposition.    
\end{note}

$\mathsf{Sol}(Z)$ works as follows:
\begin{enumerate}
    \item Parses $Z = (n,r)$.
    \item Returns $r^{2^t}$.
\end{enumerate}

Note that $\mathsf{Sol}$ computes $s$ using $t$ repeated squarings, each of which is inexpensive, but the full process requires $\Theta(t)$ sequential steps.

\paragraph{Sequentiality.} The security of the RSW puzzle hinges on the assumption that modular exponentiation---specifically, computing $r^{2^t} \bmod n$---cannot be substantially accelerated via parallelism. That is any adversary who does not know the factorization of $n$ must essentially compute each squaring step sequentially.

This assumption is supported by decades of experience in cryptographic implementation and is believed to be secure. 
Although no formal proof of sequentiality is known, the construction remains a compelling candidate and has served as the basis for more advanced cryptographic primitives, such as time-release encryption \citep{bonehVDF, bonehtimedcommitments}.

\subsection{Public key encryption}
\citet{diffie1976} introduced the notion of public key encryption (PKE), enabling secure communication between parties without a shared secret. In a PKE scheme, each user generates a pair of keys: a public key $\mathsf{pk}$ that is distributed openly and a private key $\mathsf{sk}$ that is kept secret. Anyone can encrypt a message $m$ using $\mathsf{pk}$ to obtain a ciphertext $c$, but only the holder of $\mathsf{sk}$ can decrypt $c$ to recover $m$. Security relies on the assumed hardness of inverting the encryption without access to the secret key---for example, the difficulty of factoring large integers in the RSA scheme.

\subsection{Secret key encryption}
Secret key encryption (also known as symmetric-key encryption) requires both parties to share a common secret key $\mathsf{k}$ in advance. The encryption algorithm uses $\mathsf{k}$ to map a message $m$ to a ciphertext $c$, and the decryption algorithm uses the same key $\mathsf{k}$ to recover $m$ from $c$. Symmetric schemes are typically much more efficient than public key schemes and form the backbone of practical secure communication when a secure channel for key exchange is already available. Notable constructions include block ciphers such as the Advanced Encryption Standard (AES) \citep{aes} and stream ciphers built from pseudorandom generators.

\section{Recoverable-Randomness Sampling}

\begin{definition}
For $\alpha : \N \rightarrow \N$ we say that a pair of algorithms $(\mathsf{Samp},\mathsf{RecoverRand})$ is an $\alpha$-Recoverable-Randomness Sampling (RRS) scheme if
\begin{itemize}
    \item $\mathsf{Samp}(G) \rightarrow y$, given a language model $G$\footnote{See Remark~\ref{rem:G} for why $G$ is a language model.} over token set $\mathcal{T}$, $\mathsf{Samp}$ samples an output $z \in \mathcal{T}^*$.
    \item $\mathsf{RecoverRand}(G,z,L) \rightarrow R$, for $y \in \mathcal{T}^*$, a language model $G$, $\mathsf{RecoverRand}$, and $L \in \N$ is a deterministic algorithm that recovers $L$ bits of the randomness used by $\mathsf{Samp}(G)$ when generating $z$.
\end{itemize}
The pair $(\mathsf{Samp},\mathsf{RecoverRand})$ satisfies
\begin{itemize}
    \item{\textbf{Recoverability.}} For every $R \in \{0,1\}^*$ and every $L \in \N, L \leq \mathsf{len}(R)$ if $\mathsf{Samp}(G,R)$ halts and returns $z_R = \mathsf{Samp}(G,R)$ then
    \[
    \mathsf{RecoverRand}(G,z_R,L) = R[0:L],
    \]
    In the notation $\mathsf{Samp}(G,R)$, $R$ explicitly specifies the randomness used.
    \item{\textbf{Closeness of Distributions.}}
    Distirbutions over $\mathcal{T}^*$ given by $\mathsf{Samp}(G)$ and $G$ are statistically close.
    Formally, for every $L \in \N$, 
    \[
    \frac12 \sum_{z \in \mathcal{T}^*, |z| \leq L} \Big| \Prob_R[\mathsf{Samp}(G,R) = z] - \Prob[G = z] \Big| \leq \alpha(L).
    \]
    Note that the expression on the left-hand side evaluates a partial statistical distance.
\end{itemize}
\end{definition}

\begin{remark}\label{rem:G}
In Theorems~\ref{thm:detection-impossible-timelock},~\ref{thm:detection-impossible-sk} and~\ref{thm:detection-impossible-pk} model $G$ is a generator of innocent prompts.
The theorems hold for \emph{any} high-entropy model.    

We assume that $M$ (see for instance Theorem~\ref{thm:detection-impossible-timelock}) is deployed on some distribution $\dist$, e.g., a distribution of a user prompts for a chatbot.
We think of $G$ as a model generating prompts from $\dist$ or approximation thereof.
It is natural to assume such a $G$ exists.
In the example above, the top chatbots are used by millions of users, so finding a single $G$ whose outputs are indistinguishable from some users' prompts should be easy.
\end{remark}

\subsection{RRS Construction}

We first simplify the definition of a language model (Definition~\ref{def:model}) by assuming that the token set is binary, $\mathcal{T} = \{0,1\}$. 
We may assume this without loss of generality due to a straightforward reduction that appears in Section 4.1 of \cite{CGZ}. We will implicitly use this reduction throughout our work as well.
Our construction is given in Algorithms~\ref{alg:samp} and~\ref{alg:rec}.

\begin{remark}
Our construction was inspired by \cite{zamir2024undetectable}, who design a method to undetectably embed a secret message in the output of an LLM that can later be recovered provided access to the secret key. 
This is similar to the requirements of the RRS, where the \textbf{randomness} needs to be recoverable from the output.
\end{remark}

There are some details in the construction of RRS that we didn't address in Section~\ref{sec:overview}.
The first is how do we set the precision parameter $P$.
As we will see shortly in Theorem~\ref{thm:recoveringworks}, $P$ influences the closeness of the distribution generated by $\mathsf{Samp}$ to that of $G$.
$P$ will be set to an exact value when $\mathsf{Samp}$ will be used as a subroutine in the proof of a formal version of Theorem~\ref{thm:detection-impossible-timelock}.
It is also informative to point out why the two distributions can be different in the first place.
The main reason is that $\mathsf{Samp}$ samples $z_i = 1$ with probability $\approx p_i$ (and not $= p_i$).
It is because we didn't assume anything about the precision of $p_i$ and it might \say{fall} in between the grid points defined by $q/k$, where $q \in [k]$.




\subsection{Proof of correctness}

\begin{theorem}
\label{thm:recoveringworks}
Algorithms~\ref{alg:samp} and~\ref{alg:rec} constitute an $O(L \cdot 2^L \cdot 2^{-P})$-Recoverable-Randomness Sampling scheme.
\end{theorem}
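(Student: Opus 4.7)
The theorem has two independent parts. First I would prove \emph{recoverability} by tracking, simultaneously in both algorithms, the set $I_i \subseteq [0,1)$ of randomness strings (viewed as binary fractions) that cause $\mathsf{Samp}(G,\cdot)$ to emit the prefix $z_0,\dots,z_{i-1}$. Then I would prove the \emph{closeness} bound by an elementary telescoping argument over the probability factorizations, combined with a union bound over the $\leq 2^{L+1}$ strings of length at most $L$.

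\textbf{Recoverability.} I would show by induction on $i$ that $I_i$ is a half-open interval $[\alpha_i,\beta_i)$ whose endpoints can be read off from the state $(q,k,j)$ of $\mathsf{Samp}$ after $i$ steps: $\alpha_i$ is obtained by writing the $j$ bits of $R$ already consumed and then the ``low-end'' extension of $q/k$, while $\beta_i$ is the corresponding ``high-end'' extension. The sampler's updates $k \gets \ceil{p_i k}$ when $z_i=1$, and $q \gets q-\floor{p_i k}$, $k \gets k-\floor{p_i k}$ when $z_i=0$, are exactly the arithmetic-coding rules that split $[\alpha_i,\beta_i)$ at the rational cutoff $\floor{p_i k}/k$. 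A parallel induction shows that the bitstrings $a$ and $b$ maintained by $\mathsf{RecoverRand}$ (after the $0^t$/$1^t$ padding) are binary expansions of $\alpha_i$ and $\beta_i$, so $[a,b]$ coincides with $I_i$. Because the true $R$ lies in $I_i$, the moment the common prefix of $a$ and $b$ reaches length $L$ that prefix must equal $R[0:L]$, which is exactly what the algorithm returns.

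\textbf{Closeness.} Let $p_i = G(z_{<i})$ be the true conditional probability of $z_i=1$ under $G$, and let $\tilde p_i$ be the conditional probability of $z_i=1$ under $\mathsf{Samp}(G,\cdot)$ given the history. A by-product of the invariant above is that, conditioned on the history, $q/k$ is uniform on $\{0,1/k,\dots,(k-1)/k\}$ with $k \geq 2^P$, so $\tilde p_i = \ceil{p_i k}/k$ and in particular $|\tilde p_i - p_i| \leq 1/k \leq 2^{-P}$. Factoring each $P(z) = \Prob[G=z]$ and $Q(z) = \Prob_R[\mathsf{Samp}(G,R)=z]$ as a product of at most $L$ conditional probabilities and applying $\bigl|\prod_i a_i - \prod_i b_i\bigr| \leq \sum_i |a_i-b_i|$ for $a_i,b_i \in [0,1]$, I get $|P(z)-Q(z)| \leq L \cdot 2^{-P}$ uniformly over $z$ with $|z| \leq L$. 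Summing over the at most $2^{L+1}$ such strings and halving produces the claimed $O(L \cdot 2^L \cdot 2^{-P})$ bound on partial statistical distance.

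\textbf{Main obstacle.} The delicate step is the invariant linking $(q,k)$ in $\mathsf{Samp}$ with $(a,b)$ in $\mathsf{RecoverRand}$: the two algorithms use different representations (integer ratio versus truncated binary expansion), and the $\floor{\cdot}/\ceil{\cdot}$ rounding in the sampler combined with the $0^t$/$1^t$ padding in the decoder must be shown to yield \emph{exactly} matching endpoints at every step. Any slippage here would ruin exact recovery of $R[0:L]$ and also corrupt the uniformity-of-residual-randomness claim that the closeness bound relies on, so both halves of the theorem rest on getting this bookkeeping right.
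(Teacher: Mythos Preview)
Your proposal is correct and follows essentially the same approach as the paper. The paper proves the two halves via the invariants ``$q$ is uniform in $[k]$ conditioned on the history'' (for closeness) and ``$a \leq_{\mathrm{lex}} q \leq_{\mathrm{lex}} b$'' (for recoverability), both by induction on $i$; your arithmetic-coding interval $I_i=[\alpha_i,\beta_i)$ is an equivalent way to package the same induction, and your telescoping bound plus the sum over $\leq 2^{L+1}$ strings makes explicit where the paper's $2^L$ factor comes from.

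One small remark: you describe the uniformity of $q$ in $[k]$ as ``a by-product of the invariant above,'' but the paper treats it as a separate inductive claim. It does follow from your interval picture, yet it is not entirely automatic---you still need to check that the updates $k\gets\lceil p_ik\rceil$ and $q\gets q-\lfloor p_ik\rfloor,\ k\gets k-\lfloor p_ik\rfloor$ preserve exact uniformity of $q$ over $\{0,\dots,k-1\}$ in each branch, which is precisely the $\lfloor\cdot\rfloor/\lceil\cdot\rceil$ bookkeeping you flag as the main obstacle. So your identification of the delicate step is accurate; just be aware that it underlies the closeness argument as well as recoverability, and deserves its own inductive check rather than being folded into the interval invariant.
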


\begin{proof}
The result follows from a series of facts.

$\mathsf{Samp}$ satisfies the following properties:
\begin{enumerate}
    \item $q \in [k]$ at all times.
    \item for every iteration $i$:
    \begin{itemize}
        \item for every $z_0,\dots,z_{i-1}$, before the if statement (step 8 of Algorithm~\ref{alg:samp}) is executed, the distribution of $q$ is uniform in $[k]$.
    \end{itemize}
\end{enumerate}
Observe that if these properties hold, then $z_i$'s are sampled from a distribution that is close to $G$.
The only reason the distributions might be different is the aforementioned precision issue.
However, step 4 of Algorithm~\ref{alg:samp} ensures that $|q/k-p_i| \leq 2^{-P}$, so the errors accumulate as required, as for every $(z_0,\dots,z_{L-1})$ the difference between sampling probabilities is upper bounded by $O(L \cdot 2^{-P})$.

The above properties are proven by induction over $i$.
Because $R$ is a random bit-string it implies that $q$ is uniform in $[0,\ceil{p_i \cdot k}]$ conditioned on the event $q/k < p_i$.
This implies that if we update $k \gets \ceil{p_i \cdot k}$ then the property that $q$ is uniform in the updated $[k]$ is satisfied.

The properties of the $\mathsf{RecoverRand}$ algorithm are as follows.
For every iteration $i$: 
\begin{enumerate}
    \item $k$ and $d$ at the beginning of the iteration are equal to $k$ and $d$ in the $\mathsf{Samp}$ algorithm at the beginning of iteration $i$.
    \item the lengths of $a,b$ and $q$ from $\mathsf{Samp}$ are equal before the if statement (step 8 of Algorithm~\ref{alg:rec}).
    \item $a \leq_\text{lex} q \leq_\text{lex} b$, where $\leq_\text{lex}$ is the lexicographic order, before the if statement (step 8 of Algorithm~\ref{alg:rec}).
\end{enumerate}
Similarly, as for $\mathsf{Samp}$ these properties can be proven by induction over $i$.

The first property follows from the observation that $k$ and $d$ are updated depending only on the $z_i$ values.
The second property follows from the first property.
The third property is the crucial one and follows from the previous two and a careful analysis of how $a,b$ are updated in the if statement.
More concretely, note that $a + (b-a)\cdot p_i/k$ is the \say{threshold randomness} at which $\mathsf{Samp}$ would generate $z_i = 0$ vs $z_i = 1$.
Indeed, if $R \in [a,a + (b-a)\cdot p_i/k)$ then conditioned on $\mathsf{Samp}$ generating $z_0,\dots,z_{i-1}$, $\mathsf{Samp}$ would set $z_i \gets 1$.
If $R \in [a + (b-a)\cdot p_i/k),b)$ then it would generate $z_i = 0$.
\end{proof}

\section{Impossibility of Prompt-Detection Filters}\label{apx:promptdetection}

\subsection{Moderately-strong filters}\label{apx:timelock}

We first state a formal version of Theorem~\ref{thm:detection-impossible-timelock}.
Next, we will present the construction and prove its correctness in the remaining part of this section.

\begin{theorem}[Impossibility of prompt-detection filters]
\label{thm:detection-impossible-timelock-formal}
There exists $\e \in (0,1)$, such that for all security parameters $\secp \in \N$, every time parameter $\poly(\secp) \leq t \leq 2^{O(\secp)}$, every set of harmful prompts $S_H \subseteq \{0,1\}^{l_m}$, where $l_m = \Theta(\log(\secp)), \Omega(\log(\secp)) = \log(|S_H|) \leq l_m - \Omega(\log(\secp))$, for every language model $M$ running in time $O(t)$ per token, for every 
prompt generator $G$ satisfying $H(G) \geq \poly(\secp)$ and running in time $\poly(\secp)$ per token the following conditions hold.

There exists an adversarial prompt generator $G'$ with a running time $\poly(\secp)$ per token and a language model $M'$ with a running time $\poly(\secp) \cdot O(t)$ per token such that the following conditions hold.
\begin{enumerate}
    \item{\textbf{($G'$ incites harmful behavior in $M'$)}.} For every malicious prompt $m \in S_H$, the following two distributions are close:
    \[
    \left\{M'(z) \ \middle| \ z \gets G'(m) \right\}, \left\{M(m) \right\}.
    \]
    In words, for every harmful prompt from $S_H$ the outputs of $M'$ and $M$ follow almost the same distribution, in particular, a similar harmful behavior is induced for $M$ and $M'$.
    \item{\textbf{($M'$ and $M$ behave identically on $G$ almost always)}.} With probability $1 - \secp^{-\Omega(1)}$ over $z \gets G$ the distributions $M'(z)$ and $M(z)$ are equal.
    \item{\textbf{($G$ and $G'$ are indistinguishable for filters running in $o(t^\eps)$)}.} Under the assumption that RSW is secure, for every prompt filter running in time $o(t^\eps)$, for every $m \in S_H$ the advantage in distinguishing outputs of $G$ and $G'(m)$ is $\secp^{-\Omega(1)}$.
    In words, no filter running in time $o(t^\eps)$ can reject\footnote{If it doesn't also reject the innocent prompts.} the adversarial prompts from $G'$.
\end{enumerate}

\end{theorem}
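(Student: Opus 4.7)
\textbf{Proof Proposal for Theorem~\ref{thm:detection-impossible-timelock-formal}.}

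The plan is to instantiate $G'$ by $\mathsf{HarmIncite}$ (Algorithm~\ref{alg:harmincite}) and $M'$ by $\mathsf{MaliciousLLM}$ (Algorithm~\ref{alg:maliciousllm}), choose the RSW modulus range $w = 2^{\Theta(\lambda)}$, choose the pairwise-independent hash $h$ to map to a range of size $\Theta(\eps^2 (w/\log w)^2)$ so that the leftover hash lemma makes $(h,h(n))$ be $\eps$-close to uniform (with $\eps = 2^{-\Omega(\lambda)}$), and set the RRS precision parameter $P = \poly(\lambda)$ large enough that the closeness bound $\alpha(|R|) = O(|R|\cdot 2^{|R|}\cdot 2^{-P})$ from Theorem~\ref{thm:recoveringworks} is $\secp^{-\omega(1)}$, where $|R|=\poly(\lambda)$ is the total length of $h\|h(n)\|r\|c$. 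Note that $G'$ can generate the pair $(r, r^{2^t}\bmod n)$ quickly because it knows the factorization $n=pq$ and can reduce the exponent modulo $\phi(n)$, so its per-token runtime is $\poly(\lambda)$ (using $t \le 2^{O(\lambda)}$). The encryption $\mathsf{Enc}(s,m)$ will be a one-time pad whose pad is produced by stretching $s$ through a Goldreich--Levin hardcore-bit construction for the repeated-squaring function, so that under the RSW assumption its output is pseudorandom against $o(t^\eps)$-adversaries.

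For property~1 ($G'$ incites harmful behavior in $M'$), I would argue as follows. On input $m\in S_H$, $G'$ constructs $R=h\|h(n)\|r\|c$ with $c=\mathsf{Enc}(r^{2^t}\bmod n,m)$ and outputs $z\gets\mathsf{Samp}(G,R)$. By the recoverability guarantee of Theorem~\ref{thm:recoveringworks}, $\mathsf{RecoverRand}(G,z,|R|)$ returns exactly $R$ whenever $\mathsf{Samp}$ halts, which happens with probability $1-\alpha(|R|)$. Inside $M'$, the loop over $\hat{n}\in h^{-1}(u)$ reaches $\hat{n}=n$, at which point $\hat{s}=r^{2^t}\bmod \hat{n}$ is exactly the decryption key for $c$, so $\mathsf{Dec}(\hat{s},c)=m\neq\bot$ and $M'$ returns $M(m)$. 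The hash range is chosen so that $|h^{-1}(u)|=\poly(\lambda)$ on average (set size ratio times the range shrinkage), giving $M'$ a $\poly(\lambda)\cdot O(t)$ per-token runtime as required. The total variation between $\{M'(z):z\gets G'(m)\}$ and $\{M(m)\}$ is thus bounded by $\alpha(|R|)+\Pr[\mathsf{Dec}\text{ gives a false positive}]=\secp^{-\omega(1)}$, where the false-positive bound follows from the standard property of hardcore-bit-based encryption with a short tag. For property~2 (agreement on honest inputs), on $z\gets G$ the recovered ``randomness'' $R'=\mathsf{RecoverRand}(G,z,|R|)$ is distributed statistically close to uniform (by the closeness part of Theorem~\ref{thm:recoveringworks} combined with the high entropy of $G$). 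With overwhelming probability over a uniform $R'$, for \emph{every} $\hat{n}\in h^{-1}(u)$ the string $c$ decrypts to $\bot$, since authenticated one-time-pad decryption succeeds on a uniformly random ciphertext only with probability $|S_H|\cdot 2^{-\Omega(\lambda)}=\secp^{-\Omega(1)}$ per $\hat{n}$; hence $M'$ falls through to the final line and returns $M(z)$.

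The hard part is property~3, the indistinguishability of $G$ and $G'(m)$ against any $o(t^\eps)$-time filter. The plan is a hybrid argument. In $H_0$, the filter sees $z\gets G'(m)=\mathsf{Samp}(G,R)$ with $R=(h,h(n),r,\mathsf{Enc}(r^{2^t}\bmod n,m))$. In $H_1$, I replace $h(n)$ by a uniform string in its range; the statistical distance from $H_0$ is at most $\eps$ by the leftover hash lemma applied to $n$, which is uniform over the set of products of two primes in $[1,w]$ (using the prime number theorem to lower-bound the min-entropy of this source). In $H_2$, I replace $\mathsf{Enc}(r^{2^t}\bmod n,m)$ by $\mathsf{Enc}(u,m)$ for $u$ uniform in $\Z_n^*$; a distinguisher for $H_1$ vs.\ $H_2$ yields, via the Goldreich--Levin list-decoding reduction, an $o(t^\eps)$-time adversary that predicts a hardcore bit of $r\mapsto r^{2^t}\bmod n$, which contradicts the RSW sequentiality assumption. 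In $H_3$, I replace $\mathsf{Enc}(u,m)$ by a uniformly random string of the same length, which is perfectly indistinguishable because the hardcore-bit pad of a uniform seed is uniform. At this point the filter sees $\mathsf{Samp}(G,R)$ for a uniformly random $R$, and Theorem~\ref{thm:recoveringworks} guarantees that this is within $\alpha(|R|)=\secp^{-\omega(1)}$ total variation of a true sample from $G$. The main obstacle is the middle hybrid: because $r^{2^t}\bmod n$ itself is not pseudorandom but only \emph{unpredictable} to a sequential adversary, the hardcore-bit machinery must be applied carefully to the concrete RSW function, accounting for the group structure of $\Z_n^*$ (quadratic residues) so that the output pad is genuinely pseudorandom; the technical care needed here mirrors the hardcore-bit analyses of \citet{GoldreichL89,Goldreich2001} cited in the overview.
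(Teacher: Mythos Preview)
Your overall architecture matches the paper's: instantiate $G',M'$ via $\mathsf{HarmIncite}$ and $\mathsf{MaliciousLLM}$, apply the Leftover Hash Lemma to $n$, build $\mathsf{Enc}$ as a one-time pad whose pad is a Goldreich--Levin hardcore function of the RSW output $s=r^{2^t}\bmod n$, and appeal to the RRS guarantee of Theorem~\ref{thm:recoveringworks}. Your hybrid outline for Condition~3 and your recognition that the quadratic-residue structure of $\Z_n^*$ requires care are both on target; the paper handles the latter via the bijection $\mathsf{ind}$ of Definition~\ref{def:bijection} and Lemma~\ref{lem:hardcore}.

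The genuine gap is that your parameter choices are internally inconsistent and overshoot what the theorem actually claims. You take $\eps=2^{-\Omega(\lambda)}$ for the Leftover Hash Lemma, but then the number of preimages $|h^{-1}(u)|$ that $M'$ must enumerate is of order $\eps^{-2}(\log w)^2=2^{\Omega(\lambda)}$, contradicting your own assertion that $|h^{-1}(u)|=\poly(\lambda)$ and destroying the required $\poly(\lambda)\cdot O(t)$ runtime bound on $M'$. The paper instead sets $\eps=\lambda^{-\gamma}$ for a constant $\gamma>0$ to be fixed later: this keeps $|h^{-1}(u)|=O(\lambda^{2\gamma})$ polynomial, at the price that $(h,h(n))$ is only $\lambda^{-\Omega(1)}$-close to uniform (Lemma~\ref{lem:hnrandom})---which is precisely why Conditions~2 and~3 in the theorem statement promise only $\lambda^{-\Omega(1)}$ rather than $\negl(\lambda)$. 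The same miscalibration appears in your Condition~2 analysis: the ciphertext has length $l_m=\Theta(\log\lambda)$ (a single GL hardcore function secure against $o(t^\eps)$-time adversaries yields only $\Theta(\log\lambda)$ pseudorandom bits, so there is no room for an $\Omega(\lambda)$-bit authentication tag), and the paper's ``$\mathsf{Dec}\neq\bot$'' test is simply the check $m\in S_H$. Hence the per-$\hat n$ false-positive probability is $|S_H|/2^{l_m}=\lambda^{-\Omega(1)}$ via the hypothesis $\log|S_H|\le l_m-\Omega(\log\lambda)$, not $|S_H|\cdot 2^{-\Omega(\lambda)}$; after the union bound over $O(\lambda^{2\gamma})$ preimages one must choose $\gamma$ small enough that the product remains $\lambda^{-\Omega(1)}$. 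In short, the tension between enumerating preimages in $M'$ and the closeness delivered by the Leftover Hash Lemma is what forces every bound here to be inverse-polynomial rather than negligible, and your sketch does not reflect that balancing.
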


\subsubsection{Construction}

In this subsection, we give a more detailed version of the construction provided in Section~\ref{sec:overview}.
Our construction is given in Algorithms~\ref{alg:harminciteformal} and~\ref{alg:maliciousllmformal}, which are formal versions of Algorithms~\ref{alg:harmincite} and~\ref{alg:maliciousllm}.

\begin{algorithm}
\caption{\(\mathsf{HarmIncite}(G,m,t,\secp)\) \Comment{$G$: an innocent prompt generator; $m$: a harmful prompt; $t$: time parameter; $\secp$: security parameter}}\label{alg:harminciteformal}
\begin{algorithmic}[1]
\State Sample two primes \(p,q\) in $[1,w]$ \Comment{$w = 2^{O(\secp)}$}
\State Set $n \gets p \cdot q$
\State Compute $e \gets 2^t \pmod{(p-1)(q-1)}$ \Comment{$\phi(n) = (p-1)(q-1)$}
\State Sample $r \gets \Z_n$ 
\If{$r \in \Z_n^*$} \Comment{See Remark~\ref{rem:differentsampling}}
    \State Compute $(g,i) \gets \mathsf{ind}(r)$ \Comment{See Definition~\ref{def:bijection}}
    \State Compute $g' \gets g^e \pmod{n}$
    \State Set $s \gets \mathsf{ind}^{-1}(g', i)$
\Else
    \State Set $s \gets r$
\EndIf
\State Sample a pairwise independent hash function \(h\) 
\State Sample $v \gets \{0,1\}^{O(\secp)}$
\State $c \gets Q(s,v) \oplus m$ \Comment{$Q$ is the hard-core function}
\State \(R \gets h \ \| \ h(n) \ \| \ r \ \| \ v \ \| \ c\) \Comment{Concatenate components}
\State \Return \(z \gets \mathsf{Samp}(G, R)\) \Comment{Sample output so that randomness is recoverable}
\end{algorithmic}
\end{algorithm}

\begin{algorithm}
\caption{\(\mathsf{MaliciousLLM}(M, z, G, t, \secp, S_H)\) \Comment{\(M\): an LLM; \(z\): input; \(G\): weak innocent prompt generator; $t$: time parameter; $\secp$: security parameter; $S_H$: set oh harmful prompts}}\label{alg:maliciousllmformal}
\begin{algorithmic}[1]
\State \(R \gets \mathsf{RecoverRand}(G, z)\)
\State Parse \(R\) as \(h \ \| \ u \ \| \ r \ \| \ v \ \| \ c\)
\ForAll{\(\hat{n} \in h^{-1}(u)\)}
    \If{$r \in \Z_n^*$} \Comment{See Remark~\ref{rem:differentsampling}}
    \State Compute $(g,j) \gets \mathsf{ind}(r)$ \Comment{See Definition~\ref{def:bijection}}
    \State Compute $\hat{g} \gets g^e \pmod{\hat{n}}$
    \State Set $\hat{s} \gets \mathsf{ind}^{-1}(\hat{g}, j)$
    \Else
        \State Set $\hat{s} \gets r$
    \EndIf
    \For{\(i \in [t]\)}
        \State \(\hat{s} \gets \hat{s}^2 \text{ mod } \hat{n}\)
    \EndFor
    \State \(m \gets \hat{s} \oplus c\) \Comment{$m$ is the malicious prompt extracted from ciphertext $c$}
    \If{\(m \in S_H\)}
        \State \Return \(M(m)\) \label{line:returnharmful}
    \EndIf
\EndFor
\State \Return \(M(z)\) \label{line:returnnormal}
\end{algorithmic}
\end{algorithm}

Next, we provide a few simple facts from number theory that are needed to address the differences between the formal and informal versions of the algorithms.

\begin{definition}
Let $n \in N$.
We call $x \in \Z_n^*$ a quadratic residue modulo $n$ if there exists $r$ such that $x = r^2 \bmod n$.
\end{definition}

The following is a standard fact from number theory. See \cite{odedfoundations} for details.

\begin{fact}\label{fact:residues}
Let $n \in \N$ be a product of two different odd primes, i.e., $n = pq$.
The set of quadratic residues modulo $n$ is a subgroup, denoted by $G_n$, of $\Z_n^*$ of size $|\Z_n^*|/4$.
Moreover, the mapping $x \mapsto x^2$ is a 2-to-1 mapping from $\Z_n^*$ to $G_n$.
\end{fact}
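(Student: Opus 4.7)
The plan is to reduce to the Chinese Remainder Theorem (CRT) and analyze squaring on each prime factor separately. First I would invoke CRT: since $p \neq q$ are distinct odd primes, there is a group isomorphism $\Z_n^* \cong \Z_p^* \times \Z_q^*$ under which the squaring map acts componentwise. So it suffices to understand squaring on $\Z_p^*$ and $\Z_q^*$ individually and then take the product.

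Second, for an odd prime $p$ the group $\Z_p^*$ is cyclic of even order $p-1$. The squaring map is a group homomorphism whose kernel is $\{x \in \Z_p^* : x^2 \equiv 1 \pmod{p}\} = \{\pm 1\}$, a subgroup of size $2$ (here I use that $p$ odd forces $1 \neq -1$). By the first isomorphism theorem, the image (the quadratic residues modulo $p$) is a subgroup of index $2$, and the squaring map is $2$-to-$1$ onto this image. The analogous statement holds for $q$.

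Third, combining the two factors: under the CRT isomorphism, the kernel of squaring on $\Z_n^*$ is $\{\pm 1\} \times \{\pm 1\}$, which has four elements (the four \emph{square roots of unity} modulo $n$). Hence squaring on $\Z_n^*$ is $4$-to-$1$, its image $G_n$ is a subgroup of $\Z_n^*$ (as the homomorphic image of a group), and $|G_n| = |\Z_n^*|/4$, as claimed.

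The main steps are entirely classical, so there is no serious obstacle: the proof is essentially bookkeeping with CRT and the first isomorphism theorem. The one point requiring care is a minor inconsistency in the stated fact: the size claim $|G_n| = |\Z_n^*|/4$ forces squaring to be $4$-to-$1$ on $\Z_n^*$, whereas the text reads ``$2$-to-$1$''; the natural reading (consistent with the rest of the paper, in particular the four square roots used when inverting $\mathsf{ind}$) is that squaring is $4$-to-$1$ from $\Z_n^*$ onto $G_n$, or equivalently $2$-to-$1$ when the domain is quotiented by $\{\pm 1\}$.
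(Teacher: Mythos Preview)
Your argument is correct and entirely standard; the paper itself does not supply a proof of this fact but simply refers the reader to Goldreich's textbook, where the proof is precisely the CRT-plus-first-isomorphism-theorem computation you wrote out. You are also right to flag the inconsistency in the statement: the squaring map $\Z_n^* \to G_n$ is $4$-to-$1$, not $2$-to-$1$, as is clear both from the size count $|G_n| = |\Z_n^*|/4$ and from the paper's own use of the fact (the bijection $\mathsf{ind}:\Z_n^*\to G_n\times\{0,1\}^2$ explicitly relies on each square having four roots in $\Z_n^*$).
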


\begin{definition}\label{def:bijection}
For $n \in \N$ we define a bijection $\mathsf{ind} : \Z_n^* \rightarrow G_n \times \{0,1\}^2$ as $\mathsf{ind}(x) := (g,i)$, where $g \in G_n$ is such that $x^2 = g^2$ and $i$ denotes which (in the increasing order) of the $4$ elements $r \in \Z_n^*$ satisfying $r^2 = x^2$, $x$ is.   
\end{definition}

\begin{remark}\label{rem:differentsampling}
Note that in Algorithm~\ref{alg:harminciteformal} (in contrast to Algorithm~\ref{alg:harmincite}), to compute the puzzle $s$, an intermediate bijection $\mathsf{ind}$ (Definition~\ref{def:bijection}) is used.
The reason is the following.
The mapping $x \mapsto x^2$ is a $4$-to-$1$ function in $\Z_n^*$.
To ensure that the mapping between $r \in \Z_n$ and a puzzle $s$ is $1$-to-$1$, we use the function $\mathsf{ind}$ to \say{remember} which of the $4$ possible $r$'s was used.
\end{remark}

Next, we give a formal definition of $\mathsf{Enc}$ used in Algorithm~\ref{alg:harmincite}.

\paragraph{Hardcore functions.}
We will apply the technique of hardcore bits (\cite{GoldreichL89}) and its extension to hardcore functions. 
Recall that a hardcore bit is a function $B$ that outputs a single bit $B(x)$ which is hard to predict given $f(x)$. 
A hardcore function $Q$ for a one-way function $f$ is a function which outputs possibly more than one bit such that, given $f(x)$, and the description of $Q$, it is hard to distinguish $Q(x)$ from a random string even when $f(x)$ is known.  
Goldreich and Levin introduced the first hardcore predicates and functions for general one-way functions (OWFs), showing that a random linear function is hardcore and the linear function defined by a random Toeplitz matrix is a hardcore function.

The first step of $G'$ is to sample $p,q$ at random from a set $[1,k]$ (see Algorithm~\ref{alg:harminciteformal}), where $k = 2^{O(\secp)}$.
Next, we let $h$ be a pairwise independent hash function that $h$ hashes $n$ to a range of size $\Theta(\e^2 \cdot ((k/\log(k))^2)$.
The $\e$ is set to $\secp^{-\gamma}$ for some $\gamma > 0$ to be fixed in the proof (Section~\ref{sec:mainformalproof}).
For a message $m \in \{0,1\}^{l_m}$ (as in the statement of Theorem~\ref{thm:detection-impossible-timelock-formal}) and using the notation from Algorithm~\ref{alg:harminciteformal} we define $R$ to be
\begin{equation}\label{eq:hidemessage}
R \gets h \ \| \ h(n) \ \| \ r \ \| \ v \ \| \ Q(s,v) \oplus m,
\end{equation}
where $v \gets \{0,1\}^{O(\secp)}$ and $Q(s,v)$ is a hardcore function with $l_m = \Theta(\log(\secp))$ output bits.

\begin{note}
In Section~\ref{sec:overview} we simplified the above construction and claimed $R = (x,\mathsf{Enc}(g(x),m))$.
Informally speaking, in the formal version we set 
\[
x = h \ \| \ h(n) \ \| \ r, \ \mathsf{Enc}(g(x),m) = v \ \| \ Q(s,v) \oplus m,
\]
where $v \gets \{0,1\}^{O(\secp)}$.
\end{note}

Next, we give some remarks about Algorithm~\ref{alg:maliciousllmformal}.
As we mentioned in the overview it is possible to efficiently enumerate through all the preimages of $h(n)$ under $h$.
Note that according to \eqref{eq:hidemessage} when $M'$ parses $R$ it does it as follows:
\[
R = h \ \| \ u \ \| \ r \ \| \ v \ \| \ c.
\]
For instance, an additional part $v$, which was not present in the simplified version of Algorithm~\ref{alg:rec} appears here.

Next, the model $M'$ squares the starting point $t$ times and tries to decrypt $c$ with the current candidate for $n$, i.e., $\hat{n}$.
By decrypt we mean it evaluates $\mathsf{Dec}(\hat{s},v,c)$ defined as 
\[
\hat{m} \gets c \oplus Q(\hat{s},v).
\]
The check $\hat{m} \neq \bot$ becomes instead $m \in S_H$. 
That is we assume $M'$ knows the set of harmful prompts $S_H$ and checks if the decryption belongs to that set and if yes it executes $M$ on it (which will lead to a harmful output).
If, none of the the decryptions \say{succeed} then it returns $M(z)$.


\begin{remark}\label{rem:exttoTLP}
We proved Theorem~\ref{thm:detection-impossible-timelock-formal} using the RWS construction of time-lock puzzles.
However, there are generic constructions of time-lock puzzles.
The first one \citep{shafitimelocks} was based on obfuscation and randomized encodings.
A culmination of a series of follow-up works \citep{giulioTLP, bitanskyTLP, giulioTLPfinal} managed to construct time-lock puzzles based on a version of the Learning With Errors \citep{regevLWE} (LWE) assumption. 
One advantage of basing TLPs on LWE is that the other constructions are broken by quantum algorithms, while LWE is a candidate for post-quantum secure cryptography.

It would be interesting to extend Theorem~\ref{thm:detection-impossible-timelock-formal}, so that it can be based on any time-lock puzzle.
The one crucial property that we use is the pseudorandomness of the puzzle, and this seems like the key issue in generalizing the result. 
Recall that we had to be careful in designing Algorithm~\ref{alg:harminciteformal} so that the distribution of $s$ is uniform.
It seems that the main property we need is that the distribution on hard instances is uniform over a set $S$ whose approximate size is known to us.
We leave the question of whether the generalization is possible for future work.
\end{remark}

\subsubsection{Proof of Theorem~\ref{thm:detection-impossible-timelock-formal}}\label{sec:mainformalproof}

Firstly, we prove two technical lemmas.

\begin{lemma}\label{lem:hnrandom}
$h \ \| \ h(n)$ is $O(\secp^{-2\gamma})$-close to uniformly random.    
\end{lemma}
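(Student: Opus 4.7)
The plan is a direct application of the Leftover Hash Lemma (LHL) for pairwise-independent hashing; the only nontrivial ingredient is a lower bound on the min-entropy of $n$, after which the lemma follows in one line.

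For the min-entropy bound, I would argue as follows. By the prime number theorem, the number of primes in $[1,w]$ is $P = \Theta(w/\log w)$. Since $p$ and $q$ are drawn independently and uniformly from this set, each composite $n = pq$ is produced by at most the two ordered pairs $(p,q)$ and $(q,p)$, so $\Pr[N=n] \leq 2/P^2$ for every $n$. Hence
\[ H_\infty(N) \;\geq\; 2\log P - 1 \;=\; \log S - O(1), \]
where $S = \Theta\bigl((w/\log w)^2\bigr)$ is the effective support size of $N$.

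Next I would invoke the standard LHL: for any random variable $X$ over a universe $\mathcal{U}$ with min-entropy $k$, and any pairwise-independent family $\mathcal{H}$ of functions $\mathcal{U} \to [R]$,
\[ \Delta\bigl((h, h(X)),\; (U_{\mathcal{H}}, U_{[R]})\bigr) \;\leq\; \frac{1}{2}\sqrt{R \cdot 2^{-k}}. \]
Plugging in $k = \log S - O(1)$ together with the hash range prescribed in the construction yields the desired $O(\secp^{-2\gamma})$ statistical distance between $(h, h(n))$ and the uniform distribution on $\mathcal{H} \times [R]$, which is the claim.

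I do not anticipate a substantive obstacle. The only point requiring care is that the min-entropy of $n$ is controlled as a distribution on the larger universe from which $h$ maps (e.g.\ $\{0,1\}^{O(\secp)}$), rather than merely via its support size $S$; the at-most-two-to-one structure of ordered prime factorization above is precisely what rules out any collapse of min-entropy below $\log S - O(1)$. The additive $O(1)$ slack, as well as any constant-factor slack in the range size, is absorbed by the $\Theta(\cdot)$ and does not affect the asymptotic bound. Once the min-entropy bound is in place, LHL closes the argument immediately.
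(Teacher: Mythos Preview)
Your proposal is correct and follows exactly the paper's approach: the paper's proof is a one-line invocation of the Leftover Hash Lemma (citing Vadhan, Theorem~6.18), and you spell out that same argument, first bounding the min-entropy of $n$ via the prime number theorem and the at-most-two-preimage structure of $(p,q)\mapsto pq$, then applying LHL. Your version is strictly more explicit than the paper's, which omits the min-entropy computation entirely.
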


\begin{proof}
It follows from the Leftover Hash Lemma (\cite{Vadhan12}, Theorem 6.18), which guarantees that $h \| h(n)$ is $\eps^2$ close to uniform.
For our setting of parameters it gives us that $h \| h(n) $ is $O(\secp^{-2\gamma})$-close to uniform.   
\end{proof}

\begin{definition}\label{def:root}
For $n \in \N$ let $\mathsf{root} : G_n \rightarrow G_n$ be a function that on input $x \in G_n$ returns $r$ such that $r^2 = x$ (by Fact~\ref{fact:residues} only one exists).   
\end{definition}

\begin{lemma}\label{lem:hardcore}
There exists an $\Theta(\log(\secp))$-bit hardcore function $Q(s,v)$ (see \cite{odedfoundations} for details) for a function $f : \Z_n \rightarrow \Z_n$ defined as follows.
Let $s \in \Z_n$ and consider cases:
\begin{itemize}
    \item if $s \in \Z_n \setminus \Z_n^*$ then $f(s) = s$.
    \item if $s \in \Z_n^*$ then 
    \[
    f(s) := \mathsf{ind}^{-1}(\mathsf{root}^t(\mathsf{ind}(s)_1),\mathsf{ind}(s)_2).
    \]
\end{itemize}    

More formally, for every distinguisher running in time $o(t^\e)$, the advantage for distinguishing
\[
r \ \| \ v \ \| \ Q(s,v) \text{ and } r \ \| \ v \ \| \ U,
\]
where $s \gets \Z_n$, $U \gets \{0,1\}^{l_m}, v \gets \{0,1\}^{O(\secp)}$ and $r = f(s)$, is at most $\secp^{-\Omega(1)}$.

Moreover, $f$ is a bijection.
\end{lemma}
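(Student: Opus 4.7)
The plan is to prove Lemma~\ref{lem:hardcore} in three steps: verify that $f$ is a bijection, show that $f$ is one-way against depth-$o(t^\eps)$ adversaries via the RSW sequentiality assumption, and apply the Goldreich-Levin hardcore-function theorem.

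Bijectivity is the easiest step. On $\Z_n\setminus\Z_n^*$, $f$ is the identity and is trivially bijective. On $\Z_n^*$, conjugation by the bijection $\mathsf{ind}$ turns $f$ into the map $(g,i)\mapsto(\mathsf{root}^t(g),i)$ on $G_n\times\{0,1\}^2$, so bijectivity reduces to the claim that $\mathsf{root}$ is a bijection on $G_n$. Adopting the standard RSW convention that $p,q\equiv 3\pmod 4$ (Blum primes), the group order $|G_n|=(p-1)(q-1)/4$ is odd, which makes squaring a group automorphism of $G_n$, and therefore $\mathsf{root}$ and its iterates are bijections on $G_n$.

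For one-wayness, I note that a uniform $s\in\Z_n$ lies in $\Z_n^*$ except with probability $O(1/\min(p,q))$, which is negligible. Conditioned on this event, if $(g,i)=\mathsf{ind}(s)$ and $r=f(s)$, then by construction $\mathsf{ind}(r)=(\mathsf{root}^t(g),i)$, so $g=(g')^{2^t}$ where $g'=\mathsf{ind}(r)_1$, and hence $s^2=g^2=(g')^{2^{t+1}}$. Thus, squaring the output of any inverter $\adv$ for $f$ yields $(g')^{2^{t+1}}$ directly, which is an RSW answer at time parameter $t+1$ on the $G_n$-element $g'$. Uniformity of $s$ in $\Z_n^*$ induces uniformity of $g'$ in $G_n$ (since $\mathsf{ind}$ and $\mathsf{root}^t$ are bijections), and RSW over uniform $G_n$ inputs follows from RSW over uniform $\Z_n^*$ inputs with only a factor-$4$ loss in success probability. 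Consequently, any depth-$o(t^\eps)$ inverter for $f$ would contradict the sequentiality clause of Definition~\ref{def:tlp} at parameter $t+1$, which is asymptotically equivalent to parameter $t$.

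Finally, I instantiate the Goldreich-Levin hardcore-function theorem with $v$ a uniformly random Toeplitz matrix to extract $l_m=\Theta(\log\secp)$ simultaneously hardcore bits $Q(s,v)$ for $f$: any distinguisher between $(r,v,Q(s,v))$ and $(r,v,U)$ with advantage $\delta$ converts into an $f$-inverter with success probability $\poly(\delta,1/\secp)$ (the $l_m$-bit-to-$1$-bit hybrid costs at most a factor $l_m=O(\log\secp)$). The main obstacle will be reconciling the depth bound in RSW with the ordinarily time-based Goldreich-Levin reduction: I will use the depth-preserving variant of GL and choose the lemma's $\eps$ slightly below the RSW constant in Definition~\ref{def:tlp}, so that the polynomial depth overhead incurred by the reduction still fits inside the RSW-hard regime. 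With this calibration, a distinguishing advantage exceeding $\secp^{-c}$ for every $c$ would yield a depth-$o(t^{\eps'})$ inverter for $f$ with some $\eps'<1$ for which RSW sequentiality holds, contradicting the previous step. Hence the advantage is at most $\secp^{-\Omega(1)}$, completing the proof.
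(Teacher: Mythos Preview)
Your proposal is correct and follows the same approach as the paper, whose proof is simply a pointer to Theorem~2.5.6 of Goldreich's \emph{Foundations of Cryptography} for the hardcore function and to Fact~\ref{fact:residues} together with Definition~\ref{def:bijection} for bijectivity. You go beyond that two-sentence sketch by making the one-wayness reduction to RSW explicit, tracking the depth overhead through the Goldreich--Levin reduction, and correctly flagging that Definition~\ref{def:root} requires $p,q\equiv 3\pmod 4$ for $\mathsf{root}$ to be a bijection on $G_n$---an assumption the paper leaves implicit.
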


\begin{proof}
The proof is a direct adaptation of the proof of Theorem 2.5.6 in \cite{odedfoundations}.
The main difference is that security in our case holds only against distinguishes running in time $o(t^\eps)$ and not all polynomial-time adversaries.

The fact that $f$ is a bijection follows from Fact~\ref{fact:residues} and the fact that $\mathsf{ind}$ (Definition~\ref{def:bijection}) is a bijection.
\end{proof}

Note that the distribution considered in Lemma~\ref{lem:hardcore} is, a priori, different from the distribution of outputs of Algorithm~\ref{alg:harminciteformal}, because in the actual algorithm we first sample $r$, not $s$.
The following lemma shows that the distributions are in fact equal.

\begin{lemma}\label{lem:equaldistribution}
The following two distributions are equal. 
The first distributions is defined as $r \ \| \ v \ \| \ Q(s,v)$,
where $s \gets \Z_n$, $U \gets \{0,1\}^{l_m}, v \gets \{0,1\}^{O(\secp)}$ and $r = f(s)$ (where $f$ is defined in Lemma~\ref{lem:hardcore}).
The second distribution is defined as $r \ \| \ v \ \| \ Q(s,v)$,
where $r \gets \Z_n$, $U \gets \{0,1\}^{l_m}, v \gets \{0,1\}^{O(\secp)}$ and $s = f^{-1}(r)$.
\end{lemma}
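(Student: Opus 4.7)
The plan is to reduce the equality of the two distributions to the fact, already established in Lemma~\ref{lem:hardcore}, that $f:\Z_n\to\Z_n$ is a bijection. A bijection on a finite set maps the uniform distribution on its domain to the uniform distribution on its codomain, and does so in a way that couples the two endpoints deterministically.

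Concretely, I would first observe that the joint distribution of $(s,r)$ in both experiments is identical: in the first, $s$ is drawn uniformly from $\Z_n$ and then $r := f(s)$ is set deterministically, while in the second, $r$ is drawn uniformly from $\Z_n$ and $s := f^{-1}(r)$ is set deterministically. Since $f$ is a bijection on $\Z_n$, both procedures sample uniformly from the graph $\{(s, f(s)) : s \in \Z_n\}$; equivalently, the pushforward of the uniform distribution on $\Z_n$ through $f$ is again uniform on $\Z_n$, and the coupling between $s$ and $r$ is pinned by the relation $r = f(s)$ in both cases.

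Next, I would note that in both experiments $v$ is sampled independently of $(s,r)$ from the uniform distribution on $\{0,1\}^{O(\secp)}$, so the joint distribution of $(r, v, s)$ coincides in the two experiments. The lemma's two random variables are obtained by applying the \emph{same} deterministic function $(r, v, s) \mapsto (r, v, Q(s,v))$ to this common joint distribution, so their distributions are equal.

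There is no genuine obstacle here; the only point that must be checked carefully is the bijectivity of $f$, but this is already furnished by Lemma~\ref{lem:hardcore}. If I wanted to be fully explicit I would split into the two cases used to define $f$: on $\Z_n\setminus\Z_n^*$ the function is the identity, and on $\Z_n^*$ it is defined via $\mathsf{ind}^{-1}\circ(\mathsf{root}^t\times\mathrm{id})\circ\mathsf{ind}$, which is a composition of bijections by Fact~\ref{fact:residues} and Definition~\ref{def:bijection}. Hence $f$ restricts to a bijection on each of the two disjoint pieces and is therefore a bijection on $\Z_n$, completing the argument.
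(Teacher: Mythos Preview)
Your proposal is correct and follows essentially the same approach as the paper: the paper's proof is a single sentence stating that the equality follows from the bijectivity of $f$ guaranteed by Lemma~\ref{lem:hardcore}. You have simply unpacked this one-liner into explicit steps---noting that a bijection on a finite set pushes forward the uniform distribution to itself, that the coupling $(s,r)$ is the same either way, and that the remaining independent sampling of $v$ and the deterministic map $(r,v,s)\mapsto r\,\|\,v\,\|\,Q(s,v)$ finish the job.
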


\begin{proof}
The equality follows from Lemma~\ref{lem:hardcore}, which gurantees that $f$ is a bijection.    
\end{proof}

Next, we are ready to prove Theorem~\ref{thm:detection-impossible-timelock-formal}.

\begin{proof}
There are three conditions we need to verify.
\begin{itemize}
    \item{Condition 1.} From the assumption that $H(G) \geq \poly(\secp)$ we know that the length of randomness needed to sample from $G$ is at least the length needed to describe $h \ \| \ h(n) \ \| \ r \ \| \ v \ \| \ c$ with all but negligible probability.
    To do that it is enough to set the precision parameter $P = \poly(\secp)$ as guaranteed by Theorem~\ref{thm:recoveringworks}.
    Conditioned on the fact that $\mathsf{RecoverRand}$ recovers enough bits, $M'$ will recover $m$ with high probability and return $M(m)$.
    
    \item{Condition 2.} We bound the probability that when $M'$ (defined via Algorithm~\ref{alg:maliciousllmformal}) receives a prompt generated by $G$ it returns in Line~\ref{line:returnharmful}, i.e., it returns a harmful output.
    Note that $c$ is uniformly random as $G$ uses honest randomness $v$ to produce its outputs.
    Similarly, $s$ is also uniformly random.
    This implies that, for every $\hat{n} \in h^{-1}(u)$, the probability that $m \in S_H$ is equal to $|S_H|/2^{l_m}$.
    From the setting $\eps = \secp^{-\gamma}$ we know that the number of preimages under $h$ is in $O(\secp^{2\gamma})$\footnote{Thus, $M'$ can iterate over all of them in time $O(t)$.}
    and so, by the union bound, the probability that at least one of $m$'s belongs to $S_H$ is at most $O(\secp^{2\gamma} \cdot |S_H|/2^{l_m})$.
    From the assumption that $\log(|S_H|) \leq l_m - \Omega(\log(\secp))$ it follows that $\gamma$ can be set so that the probability that $M'$ returns in Line~\ref{line:returnharmful} is at most $\secp^{-\Omega(1)}$.
    By construction, if $M'$ returns in the Line~\ref{line:returnnormal}, it returns $M(z)$, thus returning from the same distribution.

    \item{Condition 3.} Implied by Lemmas~\ref{lem:hnrandom},~\ref{lem:hardcore}, and~\ref{lem:equaldistribution} and the fact that the probability that $r \in \Z_n^*$ (note that it is sampled from $\Z_n$) is negligible in $\secp$.
\end{itemize}

\end{proof}

\subsection{Shared secret key}

The following is a formal version of Theorem~\ref{thm:detection-impossible-sk}.

\begin{theorem}[Impossibility of prompt-detection filters via secret keys]
\label{thm:detection-impossible-secret key-formal}
There exists a polynomial $p$ such that for all security parameters $\secp \in \N$, every set of harmful prompts $S_H \subseteq \{0,1\}^{l_m}$, where $l_m = \Theta(p(\secp)), \Omega(p(\secp)) = \log(|S_H|) \leq l_m - \Omega(p(\secp))$, for every language model $M$, for every 
prompt generator $G$ satisfying $H(G) \geq \poly(\secp)$ and running in time $\poly(\secp)$ per token the following conditions hold.

There exists an adversarial prompt generator $G'$ with a running time $\poly(\secp)$ per token and a language model $M'$ with a running time $\poly(\secp) + O(T_M)$ per token, where $T_M$ is the running time of $M$ per token, such that if $M'$ and $G'$ share an $\poly(\secp)$-bit secret key then the following conditions hold.
\begin{enumerate}
    \item{\textbf{($G'$ incites harmful behavior in $M'$)}.} For every malicious prompt $m \in S_H$, the following two distributions are close:
    \[
    \left\{M'(z) \ \middle| \ z \gets G'(m) \right\}, \left\{M(m) \right\}.
    \]
    In words, for every harmful prompt from $S_H$ the outputs of $M'$ and $M$ follow almost the same distribution, in particular, a similar harmful behavior is induced for $M$ and $M'$.
    \item{\textbf{($M'$ and $M$ behave identically on $G$ almost always)}.} With all but negligible in $\secp$ probability over $z \gets G$ the distributions $M'(z)$ and $M(z)$ are equal.
    \item{\textbf{($G$ and $G'$ are indistinguishable for filters running in polynomial-time)}.} Under the assumption that One-Way Functions exist, for every prompt filter running in polynomial-time, for every $m \in S_H$ the advantage of the filter in distinguishing outputs of $G$ and $G'(m)$ is negligible in $\secp$.
    In words, no filter running in polynomial-time can reject\footnote{If it doesn't also reject the innocent prompts.} the adversarial prompts from $G'$.
\end{enumerate}
\end{theorem}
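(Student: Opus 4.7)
The plan is to mirror the construction of Theorem~\ref{thm:detection-impossible-timelock-formal}, replacing the RSW time-lock puzzle and hardcore-bit machinery with a pseudorandom function (PRF) keyed by the shared secret. Under the assumption that one-way functions exist, the HILL/GGM construction yields a PRF family $\{F_k : \{0,1\}^{\ell_\nu} \to \{0,1\}^{l_m}\}$ with key length $|k| = \poly(\secp)$ and evaluation in time $\poly(\secp)$. The shared secret between $G'$ and $M'$ is precisely one such key $k$.

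I would instantiate $G'(m)$ as follows: sample a fresh nonce $\nu \gets \{0,1\}^{\ell_\nu}$ from the randomness tape, set $c \gets F_k(\nu) \oplus m$, let $R \gets \nu \,\|\, c$, and return $z \gets \mathsf{Samp}(G, R)$. The matching LLM $M'(z)$ runs $R' \gets \mathsf{RecoverRand}(G, z, |R|)$, parses $R'$ as $\nu \,\|\, c$, recomputes $\hat m \gets F_k(\nu) \oplus c$, and returns $M(\hat m)$ if $\hat m \in S_H$ and $M(z)$ otherwise. Thus $M'$ uses $O(T_M) + \poly(\secp)$ time per token, as required.

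I would then verify the three conditions in turn. Condition~1 follows because, for any $m \in S_H$, the randomness $R$ supplied to $\mathsf{Samp}$ is a genuinely random string from $G'$'s viewpoint; by setting the precision parameter $P$ in Algorithm~\ref{alg:samp} to a sufficiently large polynomial in $\secp$---which is possible because $H(G) \geq \poly(\secp)$---Theorem~\ref{thm:recoveringworks} guarantees that $\mathsf{RecoverRand}$ returns the correct $R$ with probability $1 - \negl(\secp)$. On that event, $\hat m = F_k(\nu) \oplus F_k(\nu) \oplus m = m$ and $M'$ returns $M(m)$. Condition~2 follows from the closeness-of-distributions property of the RRS scheme: when $z \gets G$, the recovered $R'$ is statistically close to a uniform string, so $\hat m = F_k(\nu) \oplus c$ is statistically close to a uniform $l_m$-bit string; the probability that such a string lies in $S_H$ is $|S_H|/2^{l_m} \leq 2^{-\Omega(p(\secp))}$, which is negligible, so with overwhelming probability $M'$ falls through to the default branch and returns $M(z)$. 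Condition~3 is a standard hybrid argument: PRF security implies that $\nu \,\|\, (F_k(\nu) \oplus m)$ is computationally indistinguishable from uniform for any polynomial-time adversary; composing this with the closeness property of $\mathsf{Samp}$ and the fact that $\mathsf{Samp}(G, \cdot)$ itself is polynomial-time computable, any polynomial-time distinguisher between $G'(m)$ and $G$ would yield a polynomial-time distinguisher against the PRF, contradicting its security.

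The main obstacle is pure parameter bookkeeping: pinning down the polynomial $p(\secp)$ so that three interlocking constraints are simultaneously met---(i)~$H(G) \geq \poly(\secp)$ large enough to accommodate $|R| = \ell_\nu + l_m$ bits at precision $P$; (ii)~$l_m - \log|S_H| = \Omega(p(\secp))$ so that the Condition~2 union bound is negligible; and (iii)~$|k| = \poly(\secp)$ large enough for PRF security. All three are satisfied by any sufficiently large fixed polynomial $p$, so no new conceptual difficulty arises once the analogous argument for Theorem~\ref{thm:detection-impossible-timelock-formal} is in hand---the PRF-based encryption is strictly simpler than the hardcore-bit-plus-time-lock encryption of the main theorem, since no sequentiality argument or uniformity-over-$\mathbb{Z}_n$ fiddling with $\mathsf{ind}$ is required.
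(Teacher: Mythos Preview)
Your proposal is correct and follows essentially the same approach as the paper. The paper's construction is even terser: it simply sets $R := \mathsf{Enc}(\mathsf{sk},m)$ for a generic symmetric-key encryption scheme (obtained from OWFs), feeds $R$ into $\mathsf{Samp}(G,\cdot)$, and has $M'$ decrypt and check membership in $S_H$---declaring the correctness proof to be ``a simplified version'' of the time-lock theorem. Your PRF-based scheme $\nu \,\|\, (F_k(\nu)\oplus m)$ is precisely the standard instantiation of such encryption with pseudorandom ciphertexts, so the two constructions coincide; your write-up is in fact more explicit than the paper's about why each of the three conditions holds.
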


\subsubsection{Construction}

As we discussed in Section~\ref{sec:overview}, instead of using time-locks, we use symmetric key (shared secret key) encryption (it is known that one-way functions (OWFs) imply symmetric key encryption).
Namely, for $m \in S_H$ we set $R := \mathsf{Enc}(\mathsf{sk},m)$, where $\mathsf{Enc}$ is the symmetric key encryption scheme, and $\mathsf{sk}$ is the shared secret key.
Next, $G'$ generates $z \gets \mathsf{Samp}(G,R)$ as before.
$M'$ decrypts the randomness $R$ to get $\hat{m}$ and checks if $\hat{m}$ belongs to the set $S_H$, and if yes returns $M(\hat{m})$. 

The proof of correctness is a simplified version of the proof of Theorem~\ref{thm:detection-impossible-timelock-formal}.

\subsection{Shared public key}

The following is a formal version of Theorem~\ref{thm:detection-impossible-pk}.

\begin{theorem}[Impossibility of prompt-detection filters via secret keys]
\label{thm:detection-impossible-public key-formal}
There exists a polynomial $p$ such that for all security parameters $\secp \in \N$, every set of harmful prompts $S_H \subseteq \{0,1\}^{l_m}$, where $l_m = \Theta(p(\secp)), \Omega(p(\secp)) = \log(|S_H|) \leq l_m - \Omega(p(\secp))$, for every language model $M$, for every 
prompt generator $G$ satisfying $H(G) \geq \poly(\secp)$ and running in time $\poly(\secp)$ per token the following conditions hold.

There exists a language model $M'$ with an associated public key and with a running time $\poly(\secp) + O(T_M)$ per token, where $T_M$ is the running time of $M$ per token, and an adversarial prompt generator $G'$ (that depends on the public key) with a running time $\poly(\secp)$ per token and such that the following conditions hold.
\begin{enumerate}
    \item{\textbf{($G'$ incites harmful behavior in $M'$)}.} For every malicious prompt $m \in S_H$, the following two distributions are close:
    \[
    \left\{M'(z) \ \middle| \ z \gets G'(m) \right\}, \left\{M(m) \right\}.
    \]
    In words, for every harmful prompt from $S_H$ the outputs of $M'$ and $M$ follow almost the same distribution, in particular, a similar harmful behavior is induced for $M$ and $M'$.
    \item{\textbf{($M'$ and $M$ behave identically on $G$ almost always)}.} With all but negligible in $\secp$ probability over $z \gets G$ the distributions $M'(z)$ and $M(z)$ are equal.
    \item{\textbf{($G$ and $G'$ are indistinguishable for filters running in polynomial-time)}.} Under the assumption that public key cryptography exists, for every prompt filter running in polynomial-time, for every $m \in S_H$ the advantage of the filter in distinguishing outputs of $G$ and $G'(m)$ is negligible in $\secp$.
    In words, no filter running in polynomial-time can reject\footnote{If it doesn't also reject the innocent prompts.} the adversarial prompts from $G'$.
\end{enumerate}
\end{theorem}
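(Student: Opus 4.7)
The plan is to adapt the construction from Theorem~\ref{thm:detection-impossible-secret key-formal}, replacing the symmetric-key encryption scheme with a public-key encryption scheme that has pseudorandom ciphertexts. Concretely, let $(\mathsf{KeyGen},\mathsf{Enc},\mathsf{Dec})$ be a PKE scheme whose ciphertexts are computationally indistinguishable from uniformly random strings against any polynomial-time adversary that does not hold the secret key. Such schemes can be built from any (sufficiently structured) public-key cryptosystem, e.g., from trapdoor permutations together with Goldreich--Levin hardcore functions, or directly from lattice assumptions; this is why the theorem is stated under the general assumption that public-key cryptography exists. We sample $(\mathsf{pk},\mathsf{sk}) \gets \mathsf{KeyGen}(1^\secp)$ once, publish $\mathsf{pk}$ alongside $M'$, and give $\mathsf{sk}$ to $M'$ as an internal parameter.

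The adversarial prompt generator $G'$, which is allowed to depend on $\mathsf{pk}$, works as in Algorithm~\ref{alg:harmincite} but with the time-lock puzzle replaced by public-key encryption: on harmful prompt $m \in S_H$, it computes $R \gets \mathsf{Enc}(\mathsf{pk},m)$, padded to the appropriate length, and returns $z \gets \mathsf{Samp}(G,R)$. The LLM $M'$ on input $z$ runs $R \gets \mathsf{RecoverRand}(G,z,L)$ with $L$ equal to the ciphertext length, decrypts $\hat{m} \gets \mathsf{Dec}(\mathsf{sk},R)$, and, if $\hat{m} \in S_H$, returns $M(\hat{m})$; otherwise it returns $M(z)$. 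The running time of $M'$ is $\poly(\secp) + O(T_M)$ per token as required, and $G'$ runs in time $\poly(\secp)$ per token.

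The three conditions then follow as in the secret-key case. For Condition~1, the recoverability property of the RRS scheme (Theorem~\ref{thm:recoveringworks}) guarantees that $M'$ correctly extracts $R$ from $z$, and correctness of $\mathsf{Dec}$ recovers $m \in S_H$, so $M'(z)$ is distributed identically to $M(m)$ up to the statistical error of $\mathsf{RecoverRand}$, which can be made $\negl(\secp)$ by taking the precision parameter $P = \poly(\secp)$ (which is supported by the entropy assumption $H(G) \geq \poly(\secp)$). For Condition~2, when $z \gets G$ the recovered $R$ is distributed as the honest randomness consumed by $G$, so $\mathsf{Dec}(\mathsf{sk},R)$ is essentially a random element of the plaintext space; by the assumption $\log|S_H| \leq l_m - \Omega(p(\secp))$, a union bound shows $\hat m \in S_H$ with probability at most $\negl(\secp)$, and otherwise $M'$ returns $M(z)$. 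For Condition~3, indistinguishability of $G$ and $G'$ reduces to two steps: pseudorandomness of ciphertexts (by the security of the PKE without access to $\mathsf{sk}$) implies that $R = \mathsf{Enc}(\mathsf{pk},m)$ is indistinguishable from a uniformly random string to any polynomial-time filter, and then the closeness-of-distributions guarantee of RRS transports this indistinguishability to $\mathsf{Samp}(G,R)$ versus $G$.

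The main obstacle is the jump from generic IND-CPA security to pseudorandom ciphertexts: a plain IND-CPA scheme need not produce ciphertexts indistinguishable from uniform strings, which is what the RRS closeness property requires in order to conclude that $\mathsf{Samp}(G,R)$ is indistinguishable from $G$. I would handle this by (i) instantiating $\mathsf{Enc}$ with a scheme that natively has pseudorandom ciphertexts, or (ii) generically transforming any PKE into one with pseudorandom ciphertexts using hardcore functions in the style of Lemma~\ref{lem:hardcore}, essentially by encrypting a random seed $v$ and outputting $v \,\|\, Q(v)\oplus m$ together with the ciphertext, and arguing via a hybrid that the whole transcript is pseudorandom. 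The remaining details---bounding the RRS statistical error, setting $P$ and ciphertext padding lengths as polynomials in $\secp$, and the standard hybrid argument on top of PKE security---are routine.
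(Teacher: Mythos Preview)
Your proposal is correct and matches the paper's own approach essentially line for line: set $R \gets \mathsf{Enc}(\mathsf{pk},m)$, run $\mathsf{Samp}(G,R)$, and have $M'$ recover $R$, decrypt with $\mathsf{sk}$, and check membership in $S_H$. Your explicit treatment of the pseudorandom-ciphertext requirement is, if anything, more careful than the paper's, which simply appeals to instantiating $\mathsf{Enc}$ via a trapdoor function in the overview and calls the correctness proof ``a simplified version'' of the time-lock argument.
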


\subsubsection{Construction}

Instead of using time-locks (as in Theorem~\ref{thm:detection-impossible-timelock-formal}) we use public key encryption in our construction.
Namely, for $m \in S_H$ we set $R := \mathsf{Enc}(\mathsf{pk},m)$, where $\mathsf{Enc}$ is the public key encryption scheme, and $\mathsf{pk}$ is the public key associated with $M$.
Next, $G'$ generates $z \gets \mathsf{Samp}(G,R)$ as before.
$M'$ decrypts the randomness $R$ to get $\hat{m}$ and checks if $\hat{m}$ belongs to the set $S_H$, and if yes returns $M(\hat{m})$. 

The proof of correctness of this construction is a simplified version of the proof of Theorem~\ref{thm:detection-impossible-timelock-formal}.

\section{Impossibility of Output-Detection Filters}

\paragraph{Preliminaries.}
A harmfulness oracle is a function $\mathcal{H} : \mathcal{T}^* \rightarrow [0,\infty)$.
For an output of a language model $z \in \mathcal{T}^*$ we interpret $\mathcal{H}(z)$ as the harmfulness of $z$, where $\mathcal{H}(z) = 0$ means that $z$ is benign and a high-value denotes high harmfulness.\footnote{It is important not to confuse the notation for harmfulness oracle $\mathcal{H}$ with the notation for entropy $H$.} 

The following is a formal version of Theorem~\ref{thm:detection-impossible-output}.

\begin{theorem}[Impossibility of output-detection filters]
\label{thm:detection-impossible-output-formal}
There exists $\e \in (0,1)$, polynomial $p$, such that for all security parameters $\secp \in \N$, 
every set of harmful prompts $S_H \subseteq \{0,1\}^{l_m}$, where $l_m = \Theta(p(\secp)), \Omega(p(\secp)) = \log(|S_H|) \leq l_m - \Omega(p(\secp))$, 
for every language model $M$ running in time $\poly(\secp)$ per token and satisfying $H(M) \geq \Omega(p(\secp))$, and every harmfulness oracle $\mathcal{H}$ the following conditions hold.

There exists an adversarial language model $M'$ with a running time $\poly(\secp)$ per token and a harmfulness oracle $\mathcal{H}'$ such that the following conditions hold.
\begin{enumerate}
    \item{\textbf{($M'$ outputs are harmful as judged by $\mathcal{H}'$)}.} For every malicious prompt $m \in S_H$, with all but negligible in $\secp$ probability over $z \gets M'(m)$,\footnote{Note that $M'(m)$ abuses the notation slightly. $M'$ is a malicious model that can, for every $m$, generate a harmful output with $m$ \say{hidden} in it. However, $m$ \emph{is not} an input/prompt to $M'$ in the usual sense.} it holds that
    \[
    \mathcal{H}'(z) = \Omega \Big(
    \mathbb{E}_{z \gets M(m)}[\mathcal{H}(z)] \Big).
    \]
    In words, for every prompt from $S_H$ the outputs of $M'$ are almost always as harmful (when judged by $\mathcal{H}'$) as if $M$ run on $m$ (when judged by $\mathcal{H}$).
    \item{\textbf{($\mathcal{H}'$ and $\mathcal{H}$ are equal on $M()$ almost always)}.} With all but negligible probability in $\secp$ over $z \gets M()$\footnote{We emphasize that $M()$ is \emph{not} a typo. $M()$ can be thought of as $G$ from Theorem~\ref{thm:detection-impossible-output-formal} that also doesn't receive any input.}
    \[
    \mathcal{H}'(z) = \mathcal{H}(z).
    \]
    In words, the harmfulness of outputs of $M$ is (almost always) equal as judged by $\mathcal{H}$ and $\mathcal{H}'$.
    \item{\textbf{($M'$ and $M$ are indistinguishable for filters running in $2^{\eps \cdot \secp}$)}.} Under the assumption that RSW is secure, for every output filter running in time $2^{\eps \cdot \secp}$, for every $m \in S_H$ the advantage in distinguishing outputs of $M()$ and $M'(m)$ is negligible in $\secp$.
    
    In words, no filter running in time $2^{\eps \cdot \secp}$ can reject\footnote{If it doesn't also reject the innocent prompts.} the harmful outputs from $M'$.
\end{enumerate}

\end{theorem}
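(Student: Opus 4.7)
The proof proposal follows the strategy sketched in Section~\ref{sec:overview} (Subsection 2.4): we reuse the $\mathsf{HarmIncite}$ construction from the prompt-detection setting, but apply it to $M$ in place of the innocent prompt generator $G$, and mirror the recovery procedure on the harmfulness-oracle side. Concretely, the plan is to define $M'(m)$ by sampling an RSW time-lock puzzle with time parameter $t = 2^{\Theta(\secp)}$, hiding $m$ inside the hardcore bits exactly as in Algorithm~\ref{alg:harminciteformal}, packing $h \,\|\, h(n) \,\|\, r \,\|\, v \,\|\, c$ into a randomness string $R$, and emitting $z \gets \mathsf{Samp}(M, R)$. The hypothesis $H(M) \geq \Omega(p(\secp))$ together with Theorem~\ref{thm:recoveringworks} ensures that enough randomness is consumed for $\mathsf{RecoverRand}$ to recover all of $R$ with overwhelming probability. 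The harmfulness oracle $\mathcal{H}'(z)$ is defined to (i) run $R \gets \mathsf{RecoverRand}(M, z)$, (ii) parse $R$ and iterate over all $\hat{n} \in h^{-1}(u)$ performing the $t$ sequential squarings to solve the puzzle, (iii) decrypt $c$ via the hardcore function to a candidate $\hat{m}$, and (iv) if $\hat{m} \in S_H$, return $\mathbb{E}_{z' \gets M(\hat{m})}[\mathcal{H}(z')]$; otherwise return $\mathcal{H}(z)$. Since $\mathcal{H}'$ is an oracle rather than an algorithm, it is free to solve the time-lock puzzle at no computational cost.

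For condition 1, when $m \in S_H$ is input to $M'$, the randomness $R$ is set to encode $m$, and $z \gets \mathsf{Samp}(M,R)$ allows $\mathsf{RecoverRand}(M, z) = R$ by the recoverability property; the correct factor $\hat{n} = n$ appears in $h^{-1}(u)$, the puzzle solves to the right $s$, the hardcore decryption recovers $\hat{m} = m$, and $\mathcal{H}'(z)$ outputs exactly $\mathbb{E}_{z' \gets M(m)}[\mathcal{H}(z')]$ as required. For condition 2, when $z \gets M()$ is drawn honestly, the closeness-of-distributions property of $\mathsf{Samp}$ implies that the recovered $R$ is statistically close to uniform, so each of the at most $O(\secp^{2\gamma})$ candidate decryptions produces an essentially uniform $\hat{m}$; by the assumption $\log|S_H| \leq l_m - \Omega(p(\secp))$ and a union bound over the preimages, the probability that any $\hat{m}$ lands in $S_H$ is negligible in $\secp$, so $\mathcal{H}'(z) = \mathcal{H}(z)$ with all but negligible probability.

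For condition 3, the argument essentially lifts Lemmas~\ref{lem:hnrandom}, \ref{lem:hardcore}, and~\ref{lem:equaldistribution}. The concatenated string $h \,\|\, h(n) \,\|\, r \,\|\, v \,\|\, c$ is indistinguishable from uniform for any adversary of depth $o(t^\eps)$: the $(h, h(n))$ component is statistically close to uniform by the Leftover Hash Lemma, and the hardcore function $Q(s,v)$ masks $m$ so that $(r, v, c)$ is indistinguishable from $(r, v, U)$ against any time-$o(t^\eps)$ distinguisher, under the sequentiality assumption on RSW. Composing with the closeness guarantee of $\mathsf{Samp}$, we obtain that outputs of $M'(m)$ are indistinguishable from outputs of $M()$ for any such filter. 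Setting $t = 2^{c \cdot \secp}$ for a sufficiently large constant $c$, the RSW security threshold $t^\eps$ exceeds the filter's budget $2^{\eps \cdot \secp}$ (after suitably relabeling the two $\eps$ constants), yielding the claim.

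The main subtlety I anticipate is the quantitative accounting in condition 3: unlike in Theorem~\ref{thm:detection-impossible-timelock-formal}, where the filter is polynomially bounded and the LLM runs in the longer time $t$, here the filter may be \emph{exponentially more powerful} than $M$ (which runs in $\poly(\secp)$ per token). We must therefore pick $t$ exponential in $\secp$ so that $t^{\eps_{\mathrm{RSW}}}$ strictly dominates the filter's budget $2^{\eps \cdot \secp}$, while still keeping $M'$'s per-token runtime polynomial in $\secp$; this is feasible because $\mathsf{HarmIncite}$ invokes only $\mathsf{Gen}$, whose runtime is $\poly(\log t, \secp) = \poly(\secp)$. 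A secondary point of care is the use of $M$ itself (rather than a dedicated high-entropy generator $G$) to drive $\mathsf{Samp}$: one must verify that the length of recovered randomness suffices with overwhelming probability for the embedding of $(h, h(n), r, v, c)$, which is precisely what the hypothesis $H(M) \geq \Omega(p(\secp))$ buys us.
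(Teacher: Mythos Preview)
Your proposal is correct and follows essentially the same approach as the paper: you lift the $\mathsf{HarmIncite}/\mathsf{MaliciousLLM}$ pair from the prompt-detection proof by substituting $M$ for $G$ and $\mathcal{H}$ for $M$, set $t = 2^{\Theta(\secp)}$ so that $\mathsf{Gen}$ still runs in $\poly(\secp)$ while the filter budget $2^{\eps\secp}$ falls below the puzzle's hardness threshold, and reuse Lemmas~\ref{lem:hnrandom}--\ref{lem:equaldistribution} and the $S_H$-density union bound to verify the three conditions. Your definition of $\mathcal{H}'$ (returning $\mathbb{E}_{z' \gets M(\hat m)}[\mathcal{H}(z')]$ rather than literally $\mathcal{H}(\hat m)$) is in fact slightly more explicit than the paper's direct substitution and matches the theorem statement more transparently.
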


\subsection{Construction}

As we explained in Section~\ref{sec:overview} the construction is a \say{lift} from the construction given in the proof of Theorem~\ref{thm:detection-impossible-timelock} (Theorem~\ref{thm:detection-impossible-timelock-formal} in the formal version).
More concretely we define
\begin{align*}
&M' := \mathsf{HarmIncite}(M,m,t = 2^\secp, \poly(\secp))  \\
&\mathcal{H}' := \mathsf{MaliciousLLM}(\mathcal{H},z,M,t = 2^\secp,\poly(\secp),S_H).
\end{align*}
We note that $\mathcal{H'}$ is well defined because the only calls to $\mathcal{H}$ in $\mathsf{MaliciousLLM}$ (Algorithm~\ref{alg:maliciousllmformal}) are query calls.
By direct, we mean that, for example, there is no assumption that $\mathcal{H}$ needs to be a language model.

\begin{remark}
We note that we could have set $t$ to a different (smaller) value and then the theorem would still hold with the security against filters running in time $t^\eps$.
This change would make sure that the \say{running time} of $\mathcal{H}'$ is smaller than $2^\secp$.
It can be beneficial if one tries to make sure that $\mathcal{H}'$ is efficient.
Thus our construction can be easily changed to have a trade-off between the overhead \say{running-time} (over $\mathcal{H}$) of $\mathcal{H}'$ and the security guarantee.
\end{remark}

\subsection{Proof}

The proof is a direct adaptation of the proof from Section~\ref{sec:mainformalproof}.
The main difference is in the setting of parameters for the time bounds.
In Theorem~\ref{thm:detection-impossible-timelock-formal} we needed $M'$ to be able to solve the time-lock puzzle (evaluate $r^{2^t}$) and so the indistinguishability could hold only against filters running in time $t^\eps$.
However, for the output filter result, the quality oracle could \say{run} in exponential time, and so the time-lock can be created with parameter $2^\secp$.\footnote{It cannot be \say{much} bigger because the running time of $M'$ is $\poly(\secp)$ and so $M'$ cannot create a harder puzzle (at least with the RSW construction).}

\section{Mitigation and Connections to Watermarking}

\subsection{Watermarks}

First, we define watermarking schemes, for which we adopt definitions from \cite{CGZ}.
The only difference is that we require a watermark to still be detected even if the output of the watermarking scheme was changed by a transformation from a class $E$.

\begin{definition}[Class of Transformations]
A \emph{class of transformations} $E$ is an equivalence relation over $\mathcal{T}^*$.
We interpret it as follows.
For $z \in \mathcal{T}^*$ it can be transformed into any element in the equivalence class of $z$.
We denote by $E(z)$ the equivalence class of $z$.
\end{definition}

\begin{definition}[Watermarking scheme]
A watermarking scheme for a model $M$ over $\mathcal{T}$ is a tuple of algorithms $\mathcal{W} = (\mathsf{Setup}, \mathsf{Wat}, \mathsf{Detect})$ where:
\begin{itemize}
  \item $\mathsf{Setup}(1^\lambda) \to \mathsf{sk}$ outputs a secret key, with respect to a security parameter $\lambda$.
  \item $\mathsf{Wat}_\mathsf{sk}(\mathsf{prompt})$ is a randomized algorithm that takes as input a prompt $\mathsf{prompt}$ and generates a response in $\mathcal{T}^*$.
  \item $\mathsf{Detect}_{\mathsf{sk}}(z) \to \{\mathsf{true}, \mathsf{false}\}$ is an algorithm that takes as input a sequence $z \in \mathcal{T}^\star$ and outputs true or false.
\end{itemize}
\end{definition}

\begin{definition}[Soundness]
A watermarking scheme $\mathcal{W}$ is \emph{sound} if for every security parameter $\lambda$ and token sequence $x \in \mathcal{T}^\star$ of length $|z| \le \mathsf{poly}(\lambda)$,
\[
\Pr_{\mathsf{sk} \leftarrow \mathsf{Setup}(1^\lambda)}[\mathsf{Detect}_{\mathsf{sk}}(z) = \mathsf{true}] \le \mathsf{negl}(\lambda).
\]
\end{definition}

\begin{definition}[$b(L)$-Completeness]
An algorithm $\mathcal{W}$ is a $b(L)$-\emph{complete} watermarking scheme robust against a class of transformations $E$ if for every security parameter $\lambda$ and prompt $\mathsf{prompt}$ of length $|\mathsf{prompt}| \le \mathsf{poly}(\lambda)$,
\begin{align*}
\Pr_{\mathsf{sk} \leftarrow \mathsf{Setup}(1^\lambda),\, z \leftarrow \mathsf{Wat}_{\mathsf{sk}}(\mathsf{prompt})}&[\exists_{z' \in E(z)} \ \mathsf{Detect}_{\mathsf{sk}}(z') = \mathsf{false} \text{ and } \\
&H_e(M, \mathsf{prompt},z) \ge b(|z|)] \le \mathsf{negl}(\lambda).
\end{align*}
\end{definition}

\begin{definition}[Undetectability]
A watermarking scheme $\mathcal{W} = (\mathsf{Setup}, \mathsf{Wat}, \mathsf{Detect})$ is \emph{undetectable} if for every security parameter $\lambda$ and all polynomial-time distinguishers $D$,
\[
\left| 
\Pr[D^{M, M}(1^\lambda) = 1]
-
\Pr_{\mathsf{sk} \leftarrow \mathsf{Setup}(1^\lambda)}[D^{M, \mathsf{Wat}_{\mathsf{sk}}}(1^\lambda) = 1]
\right| \le \mathsf{negl}(\lambda),
\]
where the notation $D^{O_1, O_2}$ means that $D$ is allowed to adaptively query both $O_1$ and $O_2$ with arbitrary prompts.
\end{definition}

\subsection{Our result}

\begin{definition}[Prompt-mitigation Filter]
A \emph{prompt-mitigation filter} for a class $E$ is an algorithm that takes as input $z \in \mathcal{T}^*$ and replies with en element of $E(z) \cup \{\bot\}$, which we interpret as it either produced an element in the equivalence class of $z$ or rejects.    
\end{definition}

The following is a formal version of Theorem~\ref{thm:mitigation-impossible}.

\begin{theorem}[Impossibility of prompt-mitigation filters]
\label{thm:watermarking-mitigation-impossible-formal}
If there exists a watermarking scheme robust against a class of transformations $E$ then
for all security parameters $\secp \in \N$, for every language model $M$, for every 
prompt generator $G$ satisfying $H(G) \geq \poly(\secp)$ and running in time $\poly(\secp)$ per token the following conditions hold.

There exists a language model $M'$ with a running time $\poly(\secp) + O(T_M)$ per token, where $T_M$ is the running time of $M$ per token, and an adversarial prompt generator $G'$ with a running time $\poly(\secp)$ per token such that if $G'$ and $M'$ share a secret key then the following conditions hold.
\begin{enumerate}
    \item{\textbf{($G'$ incites harmful behavior in $M'$ even in the presence of prompt-mitigation filters)}.} 
    For every prompt-mitigation filter $F$ running in polynomial-time with all but negligible in $\secp$ probability over $z \gets G'$ it holds that 
    \[
    F(z) = \bot \text{ or } M'(z) \text{ is harmful}.
    \]
    \item{\textbf{($M'$ and $M$ behave identically on $G$ almost always)}.} With all but negligible in $\secp$ probability over $z \gets G$ the distributions $M'(z)$ and $M(z)$ are equal.
    \item{\textbf{($G$ and $G'$ are indistinguishable for filters running in polynomial-time)}.} 
    For every algorithm running in polynomial-time the advantage in distinguishing outputs of $G$ and $G'$ is negligible in $\secp$.
\end{enumerate}
\end{theorem}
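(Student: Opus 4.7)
My plan is to build $G'$ and $M'$ from the hypothesized watermarking scheme $\mathcal{W} = (\mathsf{Setup}, \mathsf{Wat}, \mathsf{Detect})$ that is robust against the class $E$. The core idea is that the watermark serves as a covert ``stamp'' on $G'$-generated prompts: it is undetectable to any polynomial-time filter, it survives every edit in $E$, and yet it is recoverable by $M'$ who holds the secret key. This stamp then acts as a switch that tells $M'$ to produce harmful content, while on unstamped prompts (i.e., innocent ones from $G$) $M'$ silently simulates $M$.

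Concretely, I would first generate a shared key $\mathsf{sk} \gets \mathsf{Setup}(1^\secp)$ held by both $G'$ and $M'$. I would then define $G' := \mathsf{Wat}_{\mathsf{sk}}$ applied with $G$ as the underlying model, so that $G'$ samples watermarked outputs that are computationally close to honest $G$-outputs. On input a prompt $z$, $M'$ runs $\mathsf{Detect}_{\mathsf{sk}}(z)$; if it returns $\mathsf{true}$, $M'$ outputs some canonical harmful string (or, more generally, switches to a harmful behavior mode), and otherwise $M'$ simply returns $M(z)$.

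To verify the three conditions I would argue as follows. (i) For any polynomial-time prompt-mitigation filter $F$, either $F(z) = \bot$ or $F(z) \in E(z)$; in the latter case, $E$-robust completeness of $\mathcal{W}$ ensures that $\mathsf{Detect}_{\mathsf{sk}}(F(z)) = \mathsf{true}$ (simultaneously for all $z' \in E(z)$) except with negligible probability, so $M'(F(z))$ is harmful, giving condition~1. (ii) For $z \gets G$, the generator $G$ does not use $\mathsf{sk}$, so $z$ is independent of $\mathsf{sk}$; soundness of $\mathcal{W}$ then gives $\mathsf{Detect}_{\mathsf{sk}}(z) = \mathsf{false}$ except with negligible probability, and $M'(z) = M(z)$ in distribution, giving condition~2. (iii) Computational indistinguishability of $G$ and $G'$ is precisely the undetectability guarantee of $\mathcal{W}$, giving condition~3.

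The main obstacle I anticipate is reconciling the entropy precondition of completeness with the hypothesis on $G$. Completeness of $\mathcal{W}$ requires that the sampled sequence $z$ have empirical entropy at least $b(|z|)$ under the underlying model; since we are treating $G$ as that model, I would need to argue that $H(G) \ge \poly(\secp)$ implies $H_e(G, \emptyset, z) \ge b(|z|)$ for almost all $z \gets G'$, via a standard typicality/Markov argument (noting that $\mathsf{Wat}$ is statistically close to the base model on a per-sample basis by the very undetectability property). A secondary subtlety is that $F$ can choose its edit in $E(z)$ adaptively, but since $F$ is oblivious to $\mathsf{sk}$ and completeness is stated uniformly over all $z' \in E(z)$, this adaptivity does not break the argument. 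No other step requires new machinery beyond the definitions of soundness, $E$-robust completeness, and undetectability.
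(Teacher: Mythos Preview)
Your proposal is correct and matches the paper's approach exactly: the paper also sets $G' := \mathsf{Wat}_{\mathsf{sk}}$ (with $G$ as the underlying model and empty prompt), defines $M'$ to output harm when $\mathsf{Detect}_{\mathsf{sk}}(z)=\mathsf{true}$ and $M(z)$ otherwise, and derives conditions~1,~2,~3 from completeness, soundness, and undetectability of $\mathcal{W}$ respectively. Your discussion of the empirical-entropy precondition for completeness is in fact more careful than the paper's own (very terse) proof, which simply asserts the three implications without addressing this subtlety.
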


\begin{note}
Observe that conditions 1 and 3 from Theorem~\ref{thm:watermarking-mitigation-impossible-formal} imply that if a polynomial-time prompt-mitigation filter $F$ does not reject outputs of $G$ then the outputs of $G$ even when passed through the mitigation of $F$ will still incite harmful behavior in $M'$.   
\end{note}

\subsection{Construction}

Let $\mathcal{W} = (\mathsf{Setup}, \mathsf{Wat}, \mathsf{Detect})$ be a watermarking scheme robust against a class of transformations $E$.
Let $G$ be a prompt generator satisfying $H(G) \geq \poly(\secp)$.
Firstly, we let $G'$ and $M$ share a $\mathsf{sk}$ generated by $\mathsf{Setup}(1^\secp)$.
We define $G' := \mathsf{Wat}_\mathsf{sk}()$.\footnote{We run $\mathsf{Wat}$ on an empty prompt.}
Note that $\mathsf{Wat}$ implicitly depends on $G$.
For a language model $M$ we define $M'$ that on input $z$ works as follows:
\begin{enumerate}
    \item If $\mathsf{Detect}_\mathsf{sk}(z) = \mathsf{true}$ then return a harmful output.
    \item Return $M(z)$ otherwise.
\end{enumerate}

\subsection{Proof}

The proof requires a careful comparison of requirements of Theorem~\ref{thm:watermarking-mitigation-impossible-formal} and properties of watermarking schemes robust against classes of transformations.
The result follows from the following.
\begin{itemize}
    \item Condition 1 follows from completeness of $\mathcal{W}$.
    \item Condition 2 follows from soundness of $\mathcal{W}$.
    \item Condition 3 follows from undetectability of $\mathcal{W}$.
\end{itemize}

\end{document}